\theoremstyle{plain}
\newtheorem{theorem}{Theorem}
\newtheorem{proposition}{Proposition}
\newtheorem{lemma}{Lemma}
\newtheorem{claim}{Claim}
\newtheorem{remark}{Remark}
\theoremstyle{definition}
\newtheorem{definition}{Definition}
\theoremstyle{remark}
\newcommand{\beq}{\begin{eqnarray}}
\newcommand{\eeq}{\end{eqnarray}}
\newfont{\bbb}{msbm10 scaled 500}
\newfont{\bb}{msbm10 scaled 1100}
\newcommand{\RR}{\mbox{\bb R}}
\newcommand{\ev}{{\bf e}}
\newcommand{\Ec}{{\cal E}}
\newcommand{\Gc}{{\cal G}}
\newcommand{\Lc}{{\cal L}}
\newcommand{\Mc}{{\cal M}}
\newcommand{\Nc}{{\cal N}}
\newcommand{\Rc}{{\cal R}}
\newcommand{\Sc}{{\cal S}}
\newcommand\nc\newcommand
\nc\bfa{{\boldsymbol a}}\nc\bfA{{\boldsymbol A}}\nc\cA{{\mathcal A}}
\nc\bfb{{\boldsymbol b}}\nc\bfB{{\boldsymbol B}}\nc\cB{{\mathcal B}}
\nc\bfc{{\boldsymbol c}}\nc\bfC{{\boldsymbol C}}\nc\cC{{\mathcal C}}
\nc\sC{{\mathscr C}}
\nc\bfd{{\boldsymbol d}}\nc\bfD{{\boldsymbol D}}\nc\cD{{\mathcal D}}
\nc\bfe{{\boldsymbol e}}\nc\bfE{{\boldsymbol E}}\nc\cE{{\mathcal E}}
\nc\bff{{\boldsymbol f}}\nc\bfF{{\boldsymbol F}}\nc\cF{{\mathcal F}}
\nc\bfg{{\boldsymbol g}}\nc\bfG{{\boldsymbol G}}\nc\cG{{\mathcal G}}
\nc\bfh{{\boldsymbol h}}\nc\bfH{{\boldsymbol H}}\nc\cH{{\mathcal H}}
\nc\bfi{{\boldsymbol i}}\nc\bfI{{\boldsymbol I}}\nc\cI{{\mathcal I}}
\nc\bfj{{\boldsymbol j}}\nc\bfJ{{\boldsymbol J}}\nc\cJ{{\mathcal J}}
\nc\bfk{{\boldsymbol k}}\nc\bfK{{\boldsymbol K}}\nc\cK{{\mathcal K}}
\nc\bfl{{\boldsymbol l}}\nc\bfL{{\boldsymbol L}}\nc\cL{{\mathcal L}}
\nc\bfm{{\boldsymbol m}}\nc\bfM{{\boldsymbol M}}\nc\cM{{\mathcal M}}
\nc\bfn{{\boldsymbol n}}\nc\bfN{{\boldsymbol N}}\nc\cN{{\mathcal N}}
\nc\bfo{{\boldsymbol o}}\nc\bfO{{\boldsymbol O}}\nc\cO{{\mathcal O}}
\nc\bfp{{\boldsymbol p}}\nc\bfP{{\boldsymbol P}}\nc\cP{{\mathcal P}}
\nc\bfq{{\boldsymbol q}}\nc\bfQ{{\boldsymbol Q}}\nc\cQ{{\mathcal Q}}
\nc\bfr{{\boldsymbol r}}\nc\bfR{{\boldsymbol R}}\nc\cR{{\mathcal R}}
\nc\bfs{{\boldsymbol s}}\nc\bfS{{\boldsymbol S}}\nc\cS{{\mathcal S}}
\nc\bft{{\boldsymbol t}}\nc\bfT{{\boldsymbol T}}\nc\cT{{\mathcal T}}
\nc\bfu{{\boldsymbol u}}\nc\bfU{{\boldsymbol U}}\nc\cU{{\mathcal U}}
\nc\bfv{{\boldsymbol v}}\nc\bfV{{\boldsymbol V}}\nc\cV{{\mathcal V}}
\nc\bfw{{\boldsymbol w}}\nc\bfW{{\boldsymbol W}}\nc\cW{{\mathcal W}}
\nc\bfx{{\boldsymbol x}}\nc\bfX{{\boldsymbol X}}\nc\cX{{\mathcal X}}
\nc\bfy{{\boldsymbol y}}\nc\bfY{{\boldsymbol Y}}\nc\cY{{\mathcal Y}}
\nc\bfz{{\boldsymbol z}}\nc\bfZ{{\boldsymbol Z}}\nc\cZ{{\mathcal Z}}
\nc{\remove}[1]{}
\def\h_q{\qopname\relax{no}{h_q}}
\newcommand\reals{{\mathbb R}}
\newcommand\pr[1]{{\mathbb P}\big\{#1\big\}}
\newcommand\ep[1]{{\mathbb E}\big[#1\big]}
\newcommand{\algrule}[1][.2pt]{\par\vskip.5\baselineskip\hrule height #1\par\vskip.5\baselineskip}
\newcounter{ALC@tempcntr}
\DeclareMathAlphabet{\mathpzc}{OT1}{pzc}{m}{it}
\title{Associative Memory using Dictionary Learning and \\ Expander Decoding}
\author[$\dagger$]{Arya Mazumdar}
\author[$\ddagger$]{Ankit Singh Rawat\thanks{This work was done when the author was with the Computer Science Department, Carnegie Mellon University, PA, USA.}} %
\affil[$\dagger$]{College of Information \& Computer Science, University of Massachusetts Amherst, MA, USA}
\affil[$\ddagger$]{Research Laboratory of Electronics, Massachusetts Institute of Technology, MA, USA\quad \quad \quad {E-mail:~{arya@cs.umass.edu},~{asrawat@mit.edu}} }
\begin{document}

\maketitle
\begin{abstract}
An associative memory is a framework of content-addressable memory that  stores a collection of message vectors (or a {\em dataset}) over a neural network while enabling a neurally feasible mechanism to recover any message  in the dataset from its noisy version. Designing an associative memory requires addressing two main tasks: 1) {\em learning phase}: given a dataset, learn a concise representation of the dataset in the form of a graphical model  (or a neural network),  2) {\em recall phase}: given a noisy version of a message vector from the dataset, output the correct message vector via a neurally feasible algorithm over the network learnt during the learning phase. This paper studies the problem of designing a class of neural associative memories which learns a  network representation  for a large dataset that ensures correction against a large number of adversarial errors during the recall phase. Specifically, the associative memories designed in this paper can store dataset containing $\exp(n)$ $n$-length message vectors over a network with $O(n)$ nodes and can tolerate $\Omega(\frac{n}{{\rm polylog} n})$ adversarial errors. This paper carries out this memory design by mapping the learning phase and recall phase to the tasks of dictionary learning with a square dictionary and iterative error correction in an expander code, respectively.
\end{abstract}

\section{Introduction}
\label{sec:intro}

Associative memories aim to address a problem that naturally arises in many information processing systems: given a dataset $\cM$ which consists of $n$-length vectors, design a mechanism to concisely store this dataset so that any future query corresponding to a noisy version of one of the vectors in the dataset can be mapped to the correct vector. An associative memory based solution to this problem is broadly required to have two key components: 1) dataset must be stored in the form of a neural network (graph) and 2) the mechanism to map a noisy  query to the associated valid vector should be implementable in an iterative neurally feasible  manner over the  network (a
{\em neurally feasible} algorithm  employs only local computations at the nodes of the
corresponding  network based on the information obtained from their neighboring nodes). The tasks of learning the graph representation from the dataset and mapping erroneous vectors to the associated correct vectors are referred to as {\em learning phase} and {\em recall phase}, respectively. 

The overarching goal of designing an associative memory that can store a large dataset (ideally containing $\exp(n)$ message vectors using a neural network with $O(n)$ nodes) while ensuring robustness to a large number of errors (ideally $\Omega(n)$ errors) during the recall phase has led to multiple research efforts in the literature. The binary Hopfield networks, as studied in \cite{Hop82,MPRV87}, provide one of the earliest designs for the associative memories. Given a dataset containing binary vectors from $\{\pm 1\}^n$, Hopfield networks learn this dataset in the form of an $n$-node weighted graph by employing Hebbian learning~\cite{Heb05}, i.e., the weighted adjacency matrix of the graph is defined by  summing the outer products of all message vectors in the dataset. However, in their most general form, these networks suffer from small capacity. In \cite{MPRV87}, McEliece et al. show that these networks can only store $O\big(\frac{n}{\log n})$ message vectors when these messages correspond to arbitrary $n$-length binary vectors and the recall phase is required to tolerate linear $\Omega(n)$ random errors. This has motivated the researchers to look at various generalizations of Hopfield networks (see,  \cite{gross,JLZ,mceliece,MGZ,tanaka} and references therein). However, these solutions again fail to simultaneously achieve both large capacity and error tolerance. 

One remedy to small capacity is to design associative memories with structural assumptions on the dataset. This approach has been considered in \cite{gripon2011sparse,HilTran15,karbasi,kumar11,MR15,SalKarb12}. In particular in \cite{gripon2011sparse}, Gripon et al. store a dataset comprising $O(n^2)$ sparse vectors in the form of cliques in a neural network. In \cite{HilTran15}, Hillar and Tran design a Hopfield network with $n$ nodes that can store $\sim 2^{\sqrt{2n}}/n^{1/4}$ message vectors
and is robust against $n/2$ {\em random} errors.
 In \cite{karbasi,kumar11,SalKarb12,MR15}, the message vectors that need to be stored are assumed to constitute a subspace. In \cite{karbasi,kumar11,SalKarb12}, the task is to learn a bipartite factor graph of the linear constraints satisfied by the dataset subspace. The error correction during recall phase is then performed by running a belief propagation algorithm~\cite{MCT08} over the bipartite graph. In \cite{karbasi}, Karbasi et al. work with a model where the message vectors in the dataset have overlapping sets of coordinates so that shortened vectors obtained by restricting the original message vectors to each of these overlapping sets belong to a subspace. Under this model, they design associative memories that can store exponential number (in $n$) of message vectors while correcting linear number (in $n$) of random errors during the recall phase. 

The results in \cite{karbasi} hinge on the fact that the learning phase of their memory design recovers a bipartite graph which has certain desirable structural properties that are required for belief propagation type decoders to converge. However, no guarantee of recovering such a bipartite graph during the learning phase is provided in \cite{karbasi} even when we assume the subspace associated with the dataset has one such graphical representation to begin with. Recognizing the requirement of learning correct bipartite graph during the learning phase, Mazumdar and Rawat explore a sparse recovery based approach to design associative memories with the subspace dataset model in \cite{MR15}. This approach assumes that the dataset belongs to a subspace whose orthogonal subspace has null space property, a sufficient condition for sparse signal recovery. This allows one to learn any basis for the orthogonal subspace during the learning phase and then recast the recall phase as a sparse recovery problem~\cite{CandesTao2006,donoho2006compressed}. The approach in \cite{MR15} also allows for the strong error model containing {\em adversarial} errors. Specifically, \cite{MR15}  considers two candidate signal models which contain $n$-length message vectors and utilize $O(n)$ sized neural networks to store the signals. The two models have the datasets of sizes $\exp(n^{3/4})$ and $\exp(r)$  with $1 \leq r \leq n$, respectively. Furthermore, the designed associative memories based on these two signal models respectively allow for recovery from $\Omega(n^{1/4})$ and $\Omega\left(\frac{n-r}{\log^6 n}\right)$ adversarial errors in a neurally feasible manner.

In this paper, we also follow the subspace model as in \cite{karbasi,kumar11,SalKarb12,MR15}. We assume the dataset to form a subspace which is defined by {\em sparse linear constraints}. 
The model of sparse linear constraints are quite natural and less restrictive than the previous models of works such as \cite{MR15}.
Note that this signal model is similar to the model explored in Karbasi et al. \cite{karbasi}. However, our approach and contributions differ from  \cite{karbasi}, as we ensure that the learning phase {\em provably} generates the correct bipartite graph which can guarantee the error correction from a large number of errors using an iterative algorithm during the recall phase. We also note that similar to \cite{MR15} we work with the stronger error model involving adversarial errors, but our scheme is superior to that of \cite{MR15}  in terms of storage capacity (see, Theorems~\ref{thm:design_eff},~\ref{thm:bin_eff}) and number of correctable adversarial errors (improvement by poly-log factors, see, Theorems~\ref{thm:design_eff}, \ref{thm:design_noneff},~\ref{thm:bin_eff}). We  want to point out that the main technical challenge in associative memory is not to individually design the learning or recall phases, but to interface them in a way that is consistent with the operations of both phases, and to give an end-to-end performance guarantee.  

{Here, we note that the problem of designing an associative memory is closely related to the well studied nearest neighbor search (NNS) problem and its relaxation  approximate nearest neighbor search (A-NNS) problem~\cite{IM98,AI08,Sam05,WSSJ14}. The solutions to the A-NNS problem  enable one to store a dataset in such a manner that noisy versions of the vectors in a dataset (with bounded noise) can be mapped to the correct vectors. 
Additionally, the A-NNS solutions do not put assumptions on the dataset. 
However, this comes at the cost of removing the requirement of having a fast iterative or neurally feasible recall phase. Furthermore, the A-NNS solutions, especially based on locally sensitive hashing~\cite{IM98,HPM12} have large space complexity, i.e., polynomial in size of dataset.  We note that the A-NNS solutions are very much  aligned to the vector (image) retrieval task~\cite{JDS11,YGJJ15,FGJ16} which need not have a neurally feasible retrieval algorithm.}

The rest of the paper is organized as follows. In Sec.~\ref{sec:prelim}, we define the dataset model considered in this paper and present the main results of this paper along with key techniques and ideas involved in establishing those results. 
Sec. \ref{sec:proof} is dedicated to the proof of the main theorem.
In Sec.~\ref{sec:learning}, we describe the learning phase of the associative memory design results along with the relevant technical details. In Sec.~\ref{sec:recall}, we present an iterative error correction algorithm which is employed during the recall phase of the designed associative memory. This analysis of the algorithm relies on the expansion properties of the bipartite graph which defines the dataset and is learnt during the learning phase. We conclude the paper with some comments on performance   in Sec.~\ref{sec:simulations}.

\section{Main results and techniques}
\label{sec:prelim}

\subsection{Model for datasets}
\label{sec:sys}
 We focus on the associative memories based on the operations on $\reals$, the set of real numbers. 
 In our first model, we consider the message patterns to be vectors over $\reals$.
 In the second model we comment on neural associative memories storing binary message patterns that are obtained by our approach. 
 
 \subsubsection{Dataset over real numbers: the sparse-sub-Gaussian model}\label{sec:real}
 We assume the message set to form a linear subspace defined by sparse linear constraints over $\reals$.  Let $\cM \subseteq \reals^n$ denote the set of message vectors (signals) that need to be stored on the associative memory. Let $B$ be an $m \times n$ matrix comprising the linear constraints that define the message set $\cM$. In particular, we have 
\begin{align}\label{eq:cond1}
B\bfx  = 0~~~~\forall~\bfx = (x_1, x_2,\ldots, x_n) \in \cM.
\end{align}
In order to fully specify the message set $\cM$, we still need to provide a stochastic model for the matrix $B$. Towards this, we consider a random ensemble of sparse matrices. For each $j \in [n]:= \{1, 2,\ldots, n\}$, we  consider the following experiment. We pick $d$ element uniformly at random with replacement from  the set $[m] $. Let $\cN_j$ denote the set comprising these randomly picked elements. For $1 \leq i \leq m,~1\leq j \leq n$, we define
\begin{align}
\xi_{i, j} = \begin{cases}
1& \mbox{if}~i \in \cN_j \subset [m] \\
0& \mbox{otherwise}.
\end{cases}
\end{align}
Let $\big\{R_{i, j}\big\}_{1 \leq i \leq m,~1\leq j \leq n}$ be a collection of independent and identically distributed (i.i.d.) sub-Gaussian random variables. Given the random variables, $\big\{\xi_{i,j}, R_{i, j}\big\}_{1 \leq i \leq m,~1\leq j \leq n}$,  we assume that the $(i, j)$-th entry of the matrix $B$ is defined as 
\begin{align}
\label{eq:Bdef}
B_{i, j} = \xi_{i, j}R_{i, j} \in \reals~~~~\text{for}~1 \leq i \leq m,~1\leq j \leq n.
\end{align}
Through out this paper, we refer to this model for the dataset to be stored on a neural associative memory as {\em sparse-sub-Gaussian model}. {We work with various values of $d$ which we specify while stating different parameters that we obtain for the designed associative memories in Sec.~\ref{sec:main}.

This model is a quite natural random model of bipartite graphs that allow for multi-edges. Indeed, consider a  bipartite graph with disjoint sets of vertices $[n]$ (variable nodes) and $[m]$ (check nodes). There are   $d$ edges out of each variable node, being incident on uniformly and independently chosen vertices from the check nodes.

\begin{remark}
The requirement on $R_{i, j}$ is quite generic as it allows for many distributions. For example, we can assume that $R_{i, j}$ belongs to a finite set of integers $\{-L, -L + 1,\ldots,-1, 1, \ldots, L - 1, L\}$. Similarly, in another setup, $R_{i, j}$ can be assumed to be a Gaussian random variable.
\end{remark}

\subsubsection{Binary dataset}\label{sec:bin}
Our model of binary dataset is same as above except for the fact that 1) $\cM \subseteq \{+1,-1\}^n$, and 2) $R_{i,j}$ is uniform over $\{+1, -1\}$ in \eqref{eq:Bdef}. The condition of \eqref{eq:cond1} must be satisfied for any $\bfx \in \cM$.

\subsection{Our main results}
\label{sec:main}
We establish that, for a dataset $\cM$ corresponding to the null-space defined by the matrix $B$, the said matrix $B$ can be exactly recovered  from the dataset in polynomial time. Recall that there can be many sets of basis-vectors for the null-space of $\cM$. Still, we claim that it is possible to accurately recover the matrix $B$ that has been generated by the sparse-sub-Gaussian model described above.

It is essential for us that we recover the matrix $B$ exactly. Being generated by the random model defined above, $B$
exhibits certain graph expansion property that is necessary for our recall phase to be successful.
This matrix $B$  enables the error correction during the recall phase with the help of a simple iterative (neurally feasible algorithm). We summarize the parameters achieved by such memory as follows.

\begin{theorem}
\label{thm:design_eff} 
Suppose that $c, c', c''>0$ are three constants. Let $n$ be a large enough integer and $m = c\frac{n}{\log n}$. Assume that $B$ is an $m \times n$ matrix generated from the sparse-sub-Gaussian model described in Sec.~\ref{sec:real} with   $c'\le d \le c''\log n$,  and $\Mc = \{\bfx\in \reals^n:~B\bfx = 0\}$. Then, with high probability (w.h.p.) $\cM$ is an $n-m = n(1 - c/\log n)$ dimensional subspace that can be stored in a neural network (learned in poly-time in the learning phase) while allowing for correct recovery from $\Omega(\frac{n}{d^2\log^2 n})$ adversarial errors during the recall phase with a neurally feasible algorithm. 
\end{theorem}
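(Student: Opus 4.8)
The plan is to establish the three assertions of the statement --- that $\dim\mathcal{M} = n-m$, that the learning phase recovers $B$ in polynomial time, and that the recall phase corrects $\Omega(n/(d^2\log^2 n))$ adversarial errors by a neurally feasible algorithm --- and to base all three on a single structural property of the random bipartite graph $G$ underlying $B$ (left vertex set $[n]$, right vertex set $[m]$, each left vertex throwing $d$ edges onto independent uniform right vertices): with high probability $G$ is a strong left expander, i.e.\ there are constants $\epsilon\in(0,1/4)$ and $\alpha = \Theta\big(1/(d^2\log^2 n)\big)$ such that every $S\subseteq[n]$ with $|S|\le\alpha n$ has $|N(S)|\ge(1-\epsilon)\,d\,|S|$. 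This is the standard first-moment estimate: $\Pr[|N(S)| < (1-\epsilon)d|S|] \le \binom{m}{(1-\epsilon)d|S|}\big((1-\epsilon)d|S|/m\big)^{d|S|}$, and the union bound $\sum_{|S|\le\alpha n}\binom{n}{|S|}\binom{m}{(1-\epsilon)d|S|}\big((1-\epsilon)d|S|/m\big)^{d|S|}$ is $o(1)$ for the stated $\alpha$; since $m = cn/\log n$ and $d$ ranges over $[c',c''\log n]$, the size threshold picks up one factor $1/(d\log n)$ from $m/n$ and another from the expansion slack, which is exactly what produces the $1/(d^2\log^2 n)$ scaling, and closing the union bound is where $d\ge c'$ is needed.

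For the dimension, we show $\rank B = m$ with high probability, hence $\dim\mathcal{M} = n-m = n(1-c/\log n)$. A first-moment computation (now for subsets of the $m$ rows rather than the $n$ columns) shows via Hall's theorem that the bipartite incidence graph between rows and columns of $B$ has a matching saturating the rows: for $T\subseteq[m]$, the number of columns meeting $T$ is a binomial sum with mean $n\big(1-(1-|T|/m)^d\big)\gg|T|$, and Chernoff plus a union bound over all $T$ gives $|N_{\mathrm{col}}(T)|\ge|T|$ simultaneously. Such a matching exhibits an $m\times m$ submatrix of $B$ of nonzero term rank; if the $R_{i,j}$ are continuous its determinant is a not-identically-zero polynomial in the $R_{i,j}$ and hence a.s.\ nonzero, while if the $R_{i,j}$ are discrete (e.g.\ $\pm1$) one invokes an anti-concentration / Littlewood--Offord bound for sparse matrices; either way $B$ has full row rank.

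The heart of the argument is interfacing the learning and recall phases. The rows of $B$ are $\Theta(dn/m)=\Theta(d\log n)$-sparse with essentially disjoint supports (the expected overlap of two rows is $d^2\log^2 n/(c^2n)=o(1)$), so by a short girth/expansion argument they are, up to nonzero scalars, the unique sparsest vectors of the $m$-dimensional space $\mathcal{M}^\perp$. The learning phase therefore has two steps: from polynomially many generic samples drawn from $\mathcal{M}$ one recovers $\mathcal{M}^\perp$ by linear algebra (the orthogonal complement of the sample span), and then one extracts the rows of $B$, up to permutation and scaling, by an exact sparsest-basis / dictionary-learning procedure ($\ell_1$-minimization, or an ER-SpUD-type algorithm, over $\mathcal{M}^\perp$), whose hypotheses --- row sparsity $O(\log^2 n)\ll\sqrt n$, well-spread supports, anti-concentrated weights --- hold here because $d\le c''\log n$ and the supports come from the random model. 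Permuting rows only relabels check nodes and scaling a row only rescales the edge weights at a check, changing neither the Tanner graph nor the ratio $s_i/B_{i,j}$ used below, so recovering $B$ up to permutation and scaling is as good as recovering $B$; the whole procedure is polynomial time. In the recall phase we are handed this $B$ and a corrupted vector $y=x+e$ with $x\in\mathcal{M}$ and $|\supp(e)|\le\alpha n=\Omega(n/(d^2\log^2 n))$; we form the syndrome $s=By=Be$ and run an iterative, neurally feasible decoder on the Tanner graph of $B$ --- a two-layer network with $O(n)$ nodes where each check and variable uses only its neighbours. In a unique-neighbour step, a check $i$ adjacent to a single unresolved error coordinate $j$ has residual syndrome $B_{i,j}e_j\ne0$ and so reveals $e_j=s_i/B_{i,j}$ exactly; the $(1-\epsilon)$-expansion with $\epsilon<1/4$ guarantees (by the Sipser--Spielman-type counting) that at every stage a constant fraction of the checks touching the residual error support are such unique-neighbour checks, so the error support shrinks geometrically and after $O(\log n)$ rounds the decoder outputs the exact $e$, hence $x=y-e$.

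The main obstacle is precisely this interface. The code $\mathcal{M}$ depends only on $\mathcal{M}^\perp$, but the neurally feasible recall decoder works only because the specific parity-check matrix it runs on --- the random $B$ --- is a good unique-neighbour expander, and an arbitrary basis of $\mathcal{M}^\perp$ need not be; hence the learning step must be shown to return that particular $B$ (equivalently, its rows up to permutation and scaling), which is why it is cast as exact sparsest-basis recovery, and checking that the random support pattern together with sub-Gaussian weights meets the exact-recovery hypotheses of the dictionary-learning algorithm is the most delicate part. A secondary difficulty is carrying the expansion constants uniformly through the regime $c'\le d\le c''\log n$ together with $m=cn/\log n$ so as to land on $\Omega(n/(d^2\log^2 n))$, and a minor one is the real-valued adversarial error model, where one must rule out an adversary making a non-error coordinate mimic an error at a majority of its checks --- handled using that the nonzero entries of $B$ are bounded away from $0$, again by anti-concentration of the $R_{i,j}$.
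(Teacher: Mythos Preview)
Your proposal is correct and follows essentially the same architecture as the paper: establish $(1-\epsilon)$-expansion of the random bipartite graph underlying $B$ (the paper's Proposition~1 gives the same $m^2/(d^2n)=\Theta(n/(d^2\log^2 n))$ threshold), cast the learning phase as exact square dictionary learning solved by an ER-SpUD-type algorithm, and run an expander-based iterative decoder on the syndrome $B\bfe$ in the recall phase. The only noticeable differences are cosmetic: the paper frames the dictionary-learning step via the $d$-sparse \emph{columns} of $B$ (so that the relevant hypothesis is $d\le\alpha\sqrt{m}$, verified by adapting Adamczak's analysis to the sparse-sub-Gaussian model) rather than via the $\Theta(d\log n)$-sparse rows as you do, and for recall the paper invokes the Jafarpour et al.\ majority-agreement decoder rather than your unique-neighbour variant, but both succeed under the same expansion condition $\epsilon<1/4$.
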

The proof of this theorem has been provided in Sec. \ref{sec:proof}.
This result is obtained by utilizing a novel connection between recovering the matrix $B$ defining the underlying dataset $\Mc$ and the dictionary learning problem with a square dictionary as studied in \cite{SWW12,A16,BN16}. Given access to the dataset $\Mc$, we can easily find a basis for the null-space of $\cM$ containing $m = n- \dim(\Mc)$ $n$-length vectors. Let $A$ denote the $m \times n$ matrix which has the $m$ vectors in this basis as its rows. 
Note that the row vectors of $B$ also span the subspace orthogonal to the dataset $\Mc$. Moreover, w.h.p., $B$ is a full rank matrix. This implies that the following relationship holds w.h.p.,
\begin{align}
A = DB,
\end{align}
where $D$ is an invertible $m \times m$ matrix. Note that recovering the matrix $B$ from $A$ is now equivalent to dictionary learning problem~\cite{OlsField} where $n$ columns of $A$ and $B$ corresponds to $n$ observations and the associated coefficients, respectively. Furthermore the matrix $D$ corresponds to a square dictionary~\cite{SWW12}. 

As for the recall phase, we rely on the observations (as shown in Sec.~\ref{sec:recall}) that w.h.p. the bipartite graph associated with the sparse random matrix $B$ is an expander graph. Assume that we are given a noisy version $\bfy$ of a valid message vector $\bfx \in \Mc$ such that we have
\begin{align}
\bfy  = \bfx + \bfe
\end{align}
where $\bfe$ denotes the error vector. Recovering $\bfx$ from the observation $\bfy$ can be cast as a sparse recovery problem of recovering $\bfe$ from 
\[
\bfz = B\bfy = B(\bfx + \bfe) = B\bfe.
\]
If the bipartite graphs associated with $B$ is an expander graph (which holds w.h.p.), we can solve this sparse recovery problem by an efficient and iterative algorithm~\cite{JXHC09} which is motivated by the decoding algorithm of expander codes~\cite{SS96} in coding theory literature.

Due to the sample complexity requirements for efficient square-dictionary learning algorithms~\cite{SWW12,A16,BN16}, the above model allows us to store datasets that satisfy at most $O\big(\frac{n}{\log n}\big)$ linear constraints. However if we allow for a learning-phase that takes quasi-polynomial time, then it is possible to store restricted datasets that satisfy $m= \Theta(n)$ sparse-linear constraints. We summarize the result below.

\begin{theorem}
\label{thm:design_noneff}
Let $n$ be a large enough integer and $m = cn$ for a some constant $c < 1/200$. For a large enough constant $C > 0$, let $B$ be an $m \times n$ matrix generated from the sparse-sub-Gaussian model described in Sec.~\ref{sec:real} with $d = C\log n$ and  $\Mc = \{\bfx\in \reals^n:~B\bfx = 0\}$. 
Then w.h.p., $\cM$ is an $n-m = n(1 - c)$ dimensional subspace that can be stored in a
neural network (learned in quasi-polynomial-time in the learning phase) while allowing for error correction from $\Omega(\frac{n}{\log^2 n})$ adversarial errors during the recall phase with a neurally feasible algorithm.
\end{theorem}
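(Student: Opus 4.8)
The plan is to follow the two-phase template of Theorem~\ref{thm:design_eff}, changing only what the denser regime $m=cn$, $d=C\log n$ forces: the learning phase (efficient square-dictionary learning is no longer affordable, so a brute-force combinatorial procedure is used, which is why the learning time degrades to quasi-polynomial) and the graph-expansion estimate. First I would record the preliminaries that survive verbatim. Because $d=C\log n$ with $C$ large, the random sparse $B$ has full row rank $m$ w.h.p.: every column is nonzero w.h.p. (a column has $d$ entries, all equal to $0$ with probability $\to 0$), and the left-degree-$d$ random bipartite graph satisfies a Hall-type condition so that some $m\times m$ submatrix of $B$ has a zero-pattern admitting a perfect matching, hence is nonsingular for the sub-Gaussian entries. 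Thus $\dim\cM=n-m=n(1-c)$, and for any matrix $A$ whose rows form a basis of $\cM^\perp$ one has the factorization $A=DB$ with $D\in\reals^{m\times m}$ invertible, exactly as in the discussion preceding Theorem~\ref{thm:design_eff}.

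For the learning phase, the obstruction is that here $n=m/c=\Theta(m)$, which is below the $\widetilde{\Omega}(m\log m)$ sample threshold required by the efficient square-dictionary-learning algorithms of \cite{SWW12,A16,BN16} that drive Theorem~\ref{thm:design_eff}. I would instead recover $B$ directly, using that $D$ invertible makes the row space of $A$ equal to the row space of $B$: an $m$-dimensional subspace of $\reals^n$ spanned by the $m$ rows of $B$, each supported on only $\Theta(\log n)$ coordinates (the support of the $i$-th row is $T_i=\{j:i\in\cN_j\}$, and $|T_i|$ concentrates around $nd/m=d/c=\Theta(\log n)$ w.h.p., using that $C/c$ is a large enough constant). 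The algorithm enumerates every coordinate set $T\subseteq[n]$ with $|T|=O(\log n)$ and, for each, decides by linear algebra whether the row space of $A$ contains a nonzero vector supported inside $T$; the inclusion-minimal successful $T$'s are the sets $T_i$ and the corresponding vectors are the rows of $B$ up to scaling. This runs in time $\binom{n}{O(\log n)}\cdot\mathrm{poly}(n)=n^{O(\log n)}=2^{O(\log^2 n)}$, i.e. quasi-polynomial. Its correctness is the statement that, for the random model, $B$ is w.h.p. the unique sparsest factorization of its own row space: concretely, for every $T$ with $|T|\lesssim\max_i|T_i|$ the space of vectors in the row space of $B$ supported inside $T$ is spanned by those rows $B_{i,\cdot}$ with $T_i\subseteq T$, of which there is at most one since two of the near-disjoint size-$\Theta(\log n)$ sets $T_i$ cannot fit inside such a $T$. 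For a combination $v=\sum_{i\in I}w_iB_{i,\cdot}$ with $2\le|I|$ bounded, $\supp(v)$ contains every coordinate lying in exactly one of the $T_i$, $i\in I$, and since $|T_i\cap T_{i'}|=O(1)$ w.h.p. while each $|T_i|=\Theta(\log n)$, this already exceeds $\max_i|T_i|$; for larger $|I|$ one instead argues that the relevant rows of $B$ restricted to $[n]\setminus T$ remain linearly independent, which is again a unique-neighbour/Hall-type property of the random bipartite graph on the check side. The output is $\widetilde B=\Pi\Sigma B$ for some permutation $\Pi$ and nonsingular diagonal $\Sigma$; crucially $\ker\widetilde B=\ker B=\cM$ and the bipartite graph of $\widetilde B$ is that of $B$ with the $m$ check nodes relabelled, so every expansion property of $B$ is inherited. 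This bipartite graph, on $n+m=(1+c)n=O(n)$ nodes, is the stored network.

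The recall phase is then the one from Theorem~\ref{thm:design_eff} verbatim: on input $\bfy=\bfx+\bfe$ with $\bfx\in\cM$ one forms $\bfz=\widetilde B\bfy=\widetilde B\bfe$ and recovers the sparse error $\bfe$ — hence $\bfx$ — by running the iterative expander-decoding algorithm of \cite{JXHC09,SS96} on the bipartite graph of $\widetilde B$, which is neurally feasible since each round only has the $m$ check nodes recompute local residuals from their $\Theta(\log n)$ incident variable nodes and the $n$ variable nodes update from their $d$ incident check nodes. The one ingredient needing a fresh computation in the present regime is the expansion lemma of Sec.~\ref{sec:recall}: a first-moment/union-bound estimate over left-sets $S$ of size $s$ — bounding $\Pr[\,|N(S)|\le(1-\epsilon)ds\,]$ by $\binom{m}{(1-\epsilon)ds}\big((1-\epsilon)ds/m\big)^{ds}$ and summing $\binom{n}{s}$ times this over $s$ — shows that, for $m=cn$ with $c<1/200$ and $d=C\log n$ with $C$ a large enough constant, the graph is w.h.p. a $(\beta n,(1-\epsilon)d)$-expander with $\epsilon<1/4$ for some $\beta=\Theta(1/\log n)$ (this is where the bounds on $c$ and $C$ enter). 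Since the decoder's analysis needs this expansion for vertex sets of size up to a constant times $d\cdot\wt(\bfe)$, it recovers $\bfe$ exactly whenever $\wt(\bfe)$ is at most a constant multiple of $\beta n/d=\Theta(n/\log^2 n)$, which is the claimed tolerance of $\Omega(n/\log^2 n)$ adversarial errors.

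The step I expect to be the real obstacle is the \emph{interface}: the recall phase works only because the learning phase returns a matrix with exactly the zero-pattern of the random $B$ — hence carrying the expansion used in the recall phase — rather than merely \emph{some} basis of $\cM^\perp$. Securing this in the low-sample regime $n=\Theta(m)$, i.e. proving that the random sparse $B$ is w.h.p. the unique (up to row permutation and scaling) sparse factorization of its row space, and controlling the cancellation among sub-Gaussian entries together with the tails of $|T_i|$ and $|T_i\cap T_{i'}|$ uniformly over all $\binom{n}{O(\log n)}=n^{O(\log n)}$ candidate supports $T$, is where the work concentrates; the remaining steps merely re-run the corresponding steps of the proof of Theorem~\ref{thm:design_eff} with the estimates redone for $m=\Theta(n)$. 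A secondary and milder point is pinning the exact exponent of the logarithm in the error bound by balancing the expansion union bound against the decoder's tolerance.
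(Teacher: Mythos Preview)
Your proposal follows essentially the same two-phase template the paper uses for Theorem~\ref{thm:design_noneff}: replace the efficient dictionary-learning step by a brute-force search for sparse vectors in the row space of any basis $A$ of $\cM^\perp$ (this is the paper's ``exhaustive search over all possible sparse vectors''), and keep the expander-decoding recall phase of Theorem~\ref{thm:design_eff} intact. Your identification of the key interface issue --- that the learned matrix must inherit the exact support pattern (hence the expansion) of $B$ --- is precisely the point the paper isolates.

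One difference worth noting: for the uniqueness of the sparse basis (i.e., that the only $\Theta(\log n)$-sparse vectors in the row space of $B$ are scalar multiples of the rows of $B$), the paper simply invokes the uniqueness results of \cite{SWW12}, whereas you sketch a direct combinatorial argument via near-disjointness of the row supports $T_i$ and a unique-neighbour property. Your route is self-contained but, as you yourself flag, is where the real work sits; the paper's route is shorter but depends on checking that the \cite{SWW12} uniqueness statement (originally proved under the Bernoulli--sub-Gaussian model) transfers to the sparse-sub-Gaussian model at these parameters.

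One small slip in the recall calculation: the expander-decoding guarantee of \cite{JXHC09} (Proposition~\ref{prop:iterative}) requires $(2k,(1-\epsilon)d)$-expansion to recover a $k$-sparse error, i.e., expansion for sets of size $2\,\wt(\bfe)$, not $d\cdot\wt(\bfe)$. With $m=cn$ and $d=C\log n$, Proposition~\ref{prop:expander} already gives expansion up to $m^2/(d^2n)=\Theta(n/\log^2 n)$, which directly yields the claimed $\Omega(n/\log^2 n)$ adversarial errors without the extra factor of $d$ you introduced. Your final bound is correct, but the mechanism by which you reached it has two compensating inaccuracies.
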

While in terms of storage capacity this theorem is inferior to that of Theorem \ref{thm:design_eff},  it may represent some datasets better, and has better error correction capability. While the recall phase of this algorithm works same as above, for the learning phase we can no longer rely on the dictionary-learning algorithms. Instead we do an exhaustive search over all possible sparse vectors to find out a sparse basis for the null-space of $\cM$ which end up taking a quasi-polynomial time, if we choose parameters suitable for the recall phase. We here crucially use the fact
that for $m = cn$ and $d = C\log n$ such a sparse basis is unique, which can be obtained from the results of \cite{SWW12}. The proof of the recall phase for this theorem remains same as that of Theorem \ref{thm:design_eff}.

Finally, while both Theorems \ref{thm:design_eff} and \ref{thm:design_noneff} have their counterparts when storing binary vectors,
we present only one result for brevity. A sketch of the proof of the following theorem has been given in Sec. \ref{sec:binary}.
\begin{theorem}[Binary dataset]
\label{thm:bin_eff}
Suppose that $c, c', c''>0$ are three constants. Let $n$ be a large enough integer such that $m = c\frac{n}{\log n}$. Assume that $B$ is an $m \times n$ matrix generated from the binary dataset model described in Sec.~\ref{sec:bin} with $c' \leq  d  \leq c''\log n$ and  $\Mc = \{\bfx\in \{\pm1\}^n:~B\bfx = 0\}$. Then w.h.p., $|\cM|= \exp(n-\alpha n \log(d \log n)/\log n)$ for a constant $\alpha$ and $\cM$ can be stored in a
neural network (learned in polynomial-time in the learning phase) while allowing for error correction from $O(\frac{n}{d^2\log^2 n})$ adversarial errors during the recall phase with a neurally feasible algorithm.
\end{theorem}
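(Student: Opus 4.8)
The plan is to obtain the three claims — polynomial-time learnability, neurally feasible recall from the stated number of adversarial errors, and the size estimate for $\Mc$ — by borrowing the first two almost verbatim from Theorem~\ref{thm:design_eff} and spending the real effort on the last. The binary model is exactly the sparse-sub-Gaussian model with $R_{i,j}$ the Rademacher law (which is sub-Gaussian), and it uses the same parameters $m=cn/\log n$, $c'\le d\le c''\log n$; so as soon as we know that $|\Mc|$ is large enough to supply the polynomially many message vectors needed by the square-dictionary-learning algorithms of \cite{SWW12,A16,BN16}, the learning phase recovers the square factor $D$, hence $B$, exactly in polynomial time w.h.p., just as in Theorem~\ref{thm:design_eff}. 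The recall phase is unchanged: w.h.p.\ the bipartite graph of the sparse matrix $B$ is a good vertex expander (Sec.~\ref{sec:recall}), so from $\bfy=\bfx+\bfe$ with $\bfe$ sparse the iterative expander-style decoder of \cite{JXHC09,SS96} recovers $\bfe$ from the syndrome $B\bfy=B\bfe$ and corrects $\Omega\!\left(\tfrac{n}{d^2\log^2 n}\right)$ adversarial errors; since the decoder reads only the syndrome, it is irrelevant that codewords and error coordinates now live in $\{\pm1\}$ and $\{0,\pm2\}$. Hence everything reduces to showing that w.h.p.
\[
|\Mc|\;=\;\exp\!\left(n-\Theta\!\left(\tfrac{n\log(d\log n)}{\log n}\right)\right).
\]

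For the count I would run a first- and second-moment argument over the randomness of $B$. Conditioning on the incidence pattern $\{\cN_j\}_{j\in[n]}$ and writing $k_i$ for the degree of row $i$, the sign variables attached to distinct rows are independent and $(B\bfx)_i$ is a $\pm1$-sum of length $k_i$, so for every fixed $\bfx\in\{\pm1\}^n$
\[
\Prob\!\left[B\bfx=\zerov \mid \{\cN_j\}\right]\;=\;\prod_{i=1}^{m}\binom{k_i}{k_i/2}2^{-k_i},
\]
interpreted as $0$ when some $k_i$ is odd (so in particular the $k_i$ must have the right parity for $B\bfx=\zerov$ to be solvable over $\pm1$ at all). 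This does not depend on $\bfx$, so $\Exp|\Mc|=2^{n}\,\Exp_{\{\cN_j\}}\prod_i\binom{k_i}{k_i/2}2^{-k_i}$; using Stirling's estimate $\binom{k}{k/2}2^{-k}=\Theta(k^{-1/2})$ for even $k$, together with the near-uniform parity and the concentration of the $k_i$ around their common mean $\approx dn/m=d\log n/c$, this expectation equals $\exp(-\Theta(m\log(d\log n)))$ up to lower-order factors, i.e.\ $\Exp|\Mc|=\exp(n-\Theta(\tfrac{n\log(d\log n)}{\log n}))$. Markov's inequality then gives the upper bound w.h.p.

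The lower bound, via the second moment, is where the real difficulty lies. Writing $\phi(s):=\Prob[\sum_{j\in S}b_j=\zerov]$ for any $S$ with $|S|=s$ ($b_j$ the $j$-th column of $B$, well defined by symmetry) and observing that for $\bfx\ne\bfx'$ with disagreement set $F$ the joint event $\{B\bfx=\zerov,\,B\bfx'=\zerov\}$ is equivalent to each row's partial sum over $F$ and over $F^{c}$ vanishing separately — and that the columns indexed by $F$ are independent of those indexed by $F^{c}$ — one gets the clean identity
\[
\Exp|\Mc|^{2}\;=\;2^{n}\sum_{t=0}^{n}\binom{n}{t}\,\phi(t)\,\phi(n-t).
\]
The obstacle, which I expect to be the crux of the whole proof, is that the plain second moment does \emph{not} close: since $\phi$ obeys the same local-central-limit estimate as above, $\phi(t)\phi(n-t)/\phi(n)^{2}\approx(\tau(1-\tau))^{-m/2}$ with $\tau=t/n$, which is already $2^{\Theta(m)}$ at $\tau=\tfrac12$, so the $t\approx n/2$ band overshoots $2^{n}\phi(n)^{2}$ by an $\exp(\Theta(n/\log n))$ factor. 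To get around this I would pass to a truncated second moment, conditioning on a ``regular'' incidence event — row degrees of the parity and magnitude that admit the expected number of $\pm1$ kernel vectors, plus the expansion already used in the recall phase — and re-run Paley--Zygmund on that event, where the conditioning is designed to suppress exactly the dangerous parity/near-diagonal terms and yield $\Exp[\,|\Mc|^{2}\mid\text{regular}\,]=(1+o(1))(\Exp[\,|\Mc|\mid\text{regular}\,])^{2}$. Making this truncation go through — and reading off the precise constant $\alpha$, which is presumably where a smallness bound on $c$ and a largeness bound on $d$ (in the spirit of Theorem~\ref{thm:design_noneff}) enter — is the hard part; the remaining pieces (the local CLT for $\phi$, concentration of the $k_i$, and the small-$t$ terms, which are in fact killed since a single column is never $\zerov$ and a sparse column set w.h.p.\ leaves a row of odd degree) I would treat as routine.
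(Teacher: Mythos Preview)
Your treatment of the learning and recall phases matches the paper exactly: the paper simply observes that the Rademacher law is sub-Gaussian, so the dictionary-learning argument of Theorem~\ref{thm:design_eff} and the expander-decoding recall of Sec.~\ref{sec:recall} carry over unchanged.

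For the size estimate, however, the paper takes a much more minimal route than you do. It passes to a surrogate \emph{row-regular} model in which each row of $B$ has exactly $d'\sim dn/m$ nonzero $\pm1$ entries (noting that the original column-regular model ``gives the same estimate but with significantly lengthier analysis, that we omit''), and then only computes the first moment: for a uniformly random $\bfy\in\{\pm1\}^n$,
\[
\pr{B\bfy=0}=\Bigl(\tbinom{d'}{d'/2}2^{-d'}\Bigr)^{m}\ge (c'd')^{-m/2},
\]
so $\ep{|\cM|}\ge 2^{n-\frac{m}{2}\log(c'd')}$, which with $m=cn/\log n$ gives the stated exponent. The paper stops there; it does not carry out any second-moment or Paley--Zygmund step to upgrade this to a w.h.p.\ statement.

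So your first-moment computation is essentially the paper's (you stay in the column model, the paper switches models to make the row-product factorization exact), but everything you write from ``The lower bound, via the second moment, is where the real difficulty lies'' onward goes well beyond the paper's sketch. Your observation that the vanilla second moment overshoots by an $\exp(\Theta(m))$ factor is correct and is precisely the kind of obstacle the paper sidesteps by calling its argument a sketch. If your goal is to match the paper, you can drop the second-moment program entirely; if your goal is to actually prove the w.h.p.\ claim in the theorem statement, you are pursuing the right kind of argument, but be aware that the paper itself does not supply it.
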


\section{Proof of Theorem \ref{thm:design_eff}}\label{sec:proof}
\subsection{Learning phase of associative memory design}
\label{sec:learning}

As discussed in the previous section, under the dataset model considered in this paper, the learning phase of the associative memory design can be mapped to the problem of dictionary learning with a square dictionary. The very same dictionary learning problem with slightly different random model for the coefficient vector has been studied in~\cite{SWW12,A16,BN16}. In Appendix~\ref{sec:dictionary_background}, we briefly describe this line of work along with the results that are used in this paper. We then utilize the dictionary learning algorithm used in \cite{A16} to exactly learn the matrix $B$ which define our dataset and comment on the modifications required in the analysis of Adamczak~\cite{A16} to obtain guarantees on the performance of this algorithm. 

\subsubsection{Exact recovery of the matrix $B$}
\label{sec:learningB}
Our learning phase constitutes learning the matrix $B$ exactly from the dataset $\cM$.
Utilizing the dictionary learning algorithm from \cite{A16}, we design the learning phase for an associative memory storing the message set described in Sec.~\ref{sec:sys}. The learning phase consists of the following two steps.
\begin{enumerate}
\item Given the message vectors from the dataset $\cM$, first construct a basis for the subspace orthogonal to the dataset subspace $\cM = \{\bfx~:~B\bfx = 0\} \subset \RR^n$ with $\dim(\Mc) = n - m$.
\item Let $A \in \reals^{m \times n}$ denote the basis obtained in the previous step. Since w.h.p. $B$ is a full-rank matrix, we have 
\[
A = DB,
\]
where $D \in \reals^{m \times m}$ is a non-singular matrix. Now employ the modified ERSpUD dictionary learning algorithm~\cite{A16} with the matrix $A$ as its input. Note that the algorithm outputs candidates for the matrices $D$ and $B$. The method of this square-dictionary learning
and the algorithm are summarized in Appendix~\ref{sec:dictionary_background}.
\end{enumerate}

Next, we show that the proposed learning phase w.h.p. exactly recovers the matrix $B$. Note that the sparse-sub-Gaussian model used to generate $B$ (cf.~Sec.~\ref{sec:sys}) slightly differs from the Bernoulli-sub-Gaussian model studied in \cite{SWW12,A16} (cf.~Appendix~\ref{sec:dictionary_background}). In particular, for every $j \in [n]$, the distribution of the random variables $\{\xi_{i,j}~:~i \in [m]\}$ and $\{\eta_{i, j}~:~i \in [m]\}$ is different\footnote{We focus on the sparse-sub-Gaussian model as opposed to the Bernoulli-sub-Gaussian model as the bipartite graph associated with the matrix $B$ generated by the sparse-sub-Gaussian model is a good expander w.h.p. We utilize this fact while designing the recall phase for the proposed associative memory in Sec.~\ref{sec:recall}.}. However, this difference is not very crucial for the success of the learning algorithm as we still have independence among the random variables  $\xi_{i,j}$s which are indexed by different values of $j \in [n]$.  We formalize the exact recovery guarantees for the matrix $B$ in the following result.

\begin{theorem}
\label{thm:learning}
Let $B \in \reals^{m \times n}$ be a matrix generated by the sparse-sub-Gaussian model (cf.~Sec.~\ref{sec:sys}) and $\Mc$ be the associated dataset, i.e., $\Mc = \{\bfx~:~B\bfx = 0 \}$. Then there exists a constant $c  > 0$ such that whenever we have $n \geq c m\log m$ the two step learning phase of the associative memory as described above exactly recovers the linear constraints in $B$ with probability at least $1 - 1/n$.
\end{theorem}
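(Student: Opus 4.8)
The plan is to reduce Theorem~\ref{thm:learning} to the known guarantees for the ERSpUD square-dictionary-learning algorithm of \cite{SWW12}, in the sharpened form of Adamczak~\cite{A16}, and then to check that replacing their Bernoulli-sub-Gaussian coefficient ensemble by our sparse-sub-Gaussian ensemble (cf.~Sec.~\ref{sec:sys}) changes nothing essential. First I would set up the reduction. Since the $n$ columns of $B$ are drawn independently and $n \gg m$, the matrix $B$ has full row rank $m$ with probability at least $1 - n^{-2}$ (as already recorded in Sec.~\ref{sec:main}); on this event any basis matrix $A \in \reals^{m\times n}$ of the row space of $B$ satisfies $A = DB$ for a unique invertible $D \in \reals^{m\times m}$. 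This is exactly the model $Y = DX$ of dictionary learning with a square dictionary $D$ and sparse coefficient matrix $X := B$ whose \emph{columns} are the sparse vectors. The modified ERSpUD procedure of \cite{A16}, run on input $A$, solves for $\Theta(m\log m)$ randomly chosen column pairs $(i,k)$ the linear program $\min_{w}\|A^\top w\|_1$ subject to $(A_{:,i}+A_{:,k})^\top w = 1$, collects the optimizers, and retains $m$ linearly independent ones; under the guarantee, each retained $w^\top A$ is a nonzero scalar multiple of a row of $B$, so the collection determines $B$ up to a permutation, a sign, and a scaling of its rows. That is precisely ``exact recovery of the linear constraints in $B$'': it pins down each one-dimensional constraint $\langle B_{i,:}\rangle$ and, in particular, the bipartite support pattern of $B$, which is all that the recall phase of Sec.~\ref{sec:recall} uses.

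The real content is to re-run the probabilistic analysis behind that guarantee under our ensemble. Our model differs from the Bernoulli-sub-Gaussian model of \cite{SWW12,A16} in only two ways: (i) within a column the support $\cN_j$ is generated by $d$ draws \emph{with replacement} from $[m]$ rather than by $m$ independent $\mathrm{Bernoulli}(\theta)$ choices, so $|\supp(B_{:,j})|\le d$ and the within-column support indicators are negatively associated rather than independent; and (ii) consequently the per-column sparsity is essentially deterministic ($d$, less the occasional multi-edge) instead of Binomial. Neither is harmful. The ERSpUD analysis accesses the sample matrix only through: (a) mutual independence of the $n$ columns of $B$, which our model preserves verbatim; (b) upper bounds on column sparsity and on the associated norms $\max_j\|B_{:,j}\|_1$, $\max_j\|B_{:,j}\|_2$, which are if anything \emph{tighter} here because the supports have size at most $d\le c''\log n$ with no upper tail at all; (c) sub-Gaussian concentration of sums $\sum_{j\in[n]} f(B_{:,j})$ over the independent columns --- a product-measure statement, insensitive to within-column dependence --- which is what controls $\|B^\top w\|_1$ from above when $w$ is a scaled row of $D^{-1}$ and from below (the ``spreading''/non-degeneracy estimate) for every other $w$; and (d) anti-concentration of the individual sub-Gaussian weights $R_{i,j}$, which are literally the same random variables as in \cite{A16}. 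Wherever a Chernoff or Bernstein bound is actually applied inside a single column, one invokes the standard fact that negatively associated variables obey the same exponential tail bounds as independent ones; and the ``exactly $d$ versus at most $d$'' wrinkle is handled by noting that the expected number of columns touched by a multi-edge is $n\cdot O(d^2/m) = O(\log^3 n) = o(n)$, so after deleting those $o(n)$ columns one is left with an honest simple $d$-left-regular bipartite graph carrying i.i.d.\ sub-Gaussian weights and with $n' = n(1-o(1))$ still exceeding $c'\,m\log m$; removing $o(n)$ columns alters each relevant $\ell_1$ objective by a relatively negligible amount and removes only a negligible fraction of the constraint incidences used in the counting arguments.

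With these substitutions in place, the key lemma of \cite{A16} carries over: for a random column pair the LP has, with probability bounded below by an absolute constant, a unique optimizer equal to a scaled row of $D^{-1}$; hence over $\Theta(m\log m)$ independent pairs a coupon-collector argument produces all $m$ rows except with probability $m^{-\Omega(1)}$, and the constant in the hypothesis $n\ge c\,m\log m$ is chosen so that this, together with the $n^{-2}$ from the full-rank step and the contribution from the $o(n)$ deleted columns, keeps the total failure probability below $1/n$. I expect item (c) above --- re-deriving Adamczak's spreading and incoherence concentration inequalities with the column-support law changed from i.i.d.\ Bernoulli to $d$-draws-with-replacement, while carrying the negligible population of multi-edge columns --- to be the only genuine work; everything else is either structurally unchanged or strictly easier, since in our model the column sparsity is bounded almost surely rather than merely with high probability.
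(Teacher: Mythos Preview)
Your reduction to square dictionary learning and your plan to port Adamczak's analysis to the new support law are exactly the paper's approach. The paper, however, exploits a simplification you overlook: Adamczak's key concentration tool (\cite[Proposition~2.1]{A16}, restated here as Proposition~\ref{prop:bern}) only requires the column vectors $\xi_1,\ldots,\xi_n$ to be mutually independent and each marginal to satisfy $\pr{\xi_{i,j}=1}\le\theta$; it imposes \emph{no} independence assumption across the $m$ coordinates within a single column. Since the sparse-sub-Gaussian model has i.i.d.\ columns and the marginal bound $\pr{\xi_{i,j}=1}=1-(1-1/m)^d\le d/m=:\theta$ (Claim~\ref{clm:pnz}), the proposition applies verbatim---there is no need to invoke negative association or to delete multi-edge columns. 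The paper then demonstrates the port concretely by re-proving one of the four central lemmas of \cite{A16} (the analogue of \cite[Lemma~2.4]{A16}, stated as Lemma~\ref{lem:modifie_lemma}) under the new support law; the only extra wrinkle is that conditioning on the supports of a few columns in the ERSpUD analysis slightly perturbs their distribution, which is handled by the $[m+s]$-sampling in part~(ii) of that lemma and a coupling bound (see~\eqref{eq:norm_diff4}). Your item~(c) is indeed the only substantive work; the paper just carries it out more directly than your negative-association-plus-deletion route.
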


We refer the reader to Appendix~\ref{appen:thm_learning} for the proof of Theorem \ref{thm:learning}.

\subsection{Recall phase of associative memory design}
\label{sec:recall}

In this section we present an iterative algorithm which recovers the correct message vector among the dataset $\Mc$ from its noisy version. The noisy observation is assumed to be corrupted at  adversarially chosen coordinates. The correctness of the iterative algorithm relies on the observation that the bipartite graph associated with the matrix $B$ which defines our dataset $\Mc$ is a good expander graph. We first formalize this expansion property in the following result. We then present the iterative algorithm and show that it can provably tolerate $\Omega\big(\frac{n}{{\rm polylog} n}\big)$ adversarial errors.

\subsubsection{Expansion property of the bipartite graph defined by $B$}
\label{sec:expanderB}

Let $\cG_{B} = (\Lc = [n], \Rc = [m], \cE_{B})$ be a bipartite graph where $\Lc$ and $\Rc$ denote the index sets of left and right vertices, respectively. The matrix $B$ which defines our dataset $\Mc$ gives the $m \times n$ adjacency matrix of the graph $\Gc$, i.e., for $\ell \in \Lc$ and $r \in \Rc$, we have an edge $(\ell, r) \in \Ec_{B}$ iff $B_{r, \ell} \neq 0$. More specifically, the weight of the edge $(\ell, r) \in \Ec_{B}$ is $w_{\ell, r} = B_{r, \ell}$. It follows from the sparse-sub-Gaussian model (cf.~Sec.~\ref{sec:sys}) which generates the random matrix $B$ that every vertex in $\cL$ has degree $d$ and each of the $d$ neighbors for a  vertex in $\Lc$ are chosen uniformly at random from the set of right vertices $\Rc$ with replacement. The following result states that expansion properties that hold for such a graph with high probability.

\begin{proposition}
\label{prop:expander}
Assume that $\epsilon > 0$ and $d = O(\frac{n}{m\log n})$. Let $\cG = (\Lc, \Rc, \cE)$ be a random $d$-left regular graph where each of the $d$ neighbors for a left vertex are chosen uniformly at random from the set of right vertices with replacement. Then, for a large enough $n$, w.h.p., $\cG$ is an $\Big(\frac{m^2}{d^2 n}, (1 - \epsilon)d\Big)$-expander graph, where a bipartite graph is  $(t,l)$-expander, if for every $\cS \subseteq \Lc$ such that $|\cS| \leq t$, we have 
$|\cN(\cS)| \geq l|\cS|.$ 
Here, $\cN(\cS) \subseteq \Rc$ denotes the  vertices in $\cR$ that are neighbors of  vertices in $\cS$.
\end{proposition}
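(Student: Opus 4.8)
The plan is to use a standard first-moment (union bound) argument over subsets $\cS \subseteq \Lc$ of each size $s \le t := \frac{m^2}{d^2 n}$, bounding the probability that $\cN(\cS)$ fails to have size at least $(1-\epsilon)d s$. Since the graph is generated by having each of the $n$ left vertices throw $d$ independent balls (edges) uniformly into the $m$ right vertices, a set $\cS$ with $|\cS| = s$ sends $ds$ edges into $\Rc$. The neighborhood $\cN(\cS)$ is small precisely when many of these $ds$ edges collide on the right side. Concretely, $|\cN(\cS)| < (1-\epsilon)ds$ forces at least $\epsilon d s$ of the $ds$ edges to be ``repeats'' — i.e., to land on a right vertex already hit by an earlier edge from $\cS$.

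First I would fix $\cS$ with $|\cS| = s$ and order its $ds$ edges arbitrarily; revealing them one at a time, each edge independently lands on an already-occupied vertex with probability at most $\frac{ds}{m}$ (at most $ds$ vertices are ever occupied). Hence the number of repeat edges is stochastically dominated by a $\mathrm{Binomial}(ds, ds/m)$ random variable, and the probability that there are at least $\epsilon d s$ repeats is at most $\binom{ds}{\epsilon d s}\left(\frac{ds}{m}\right)^{\epsilon d s} \le \left(\frac{e\, ds}{m\epsilon}\right)^{\epsilon d s}$. Next I would union-bound over the $\binom{n}{s} \le \left(\frac{en}{s}\right)^{s}$ choices of $\cS$, so the total failure probability for size exactly $s$ is at most
\[
\left(\frac{en}{s}\right)^{s}\left(\frac{e\, ds}{\epsilon m}\right)^{\epsilon d s}
= \left[\frac{en}{s}\left(\frac{e\, ds}{\epsilon m}\right)^{\epsilon d}\right]^{s}.
\]
I then want the bracketed base to be at most, say, $n^{-\delta}$ for all $1 \le s \le t$; summing the resulting geometric-type series over $s$ and over the (at most $n$) values of $s$ gives a bound of the form $o(1)$, establishing the claim w.h.p.

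To verify the bracketed base is small: using $s \le t = \frac{m^2}{d^2 n}$ we get $\frac{ds}{m} \le \frac{m}{dn}$, so $\left(\frac{e\,ds}{\epsilon m}\right)^{\epsilon d} \le \left(\frac{em}{\epsilon d n}\right)^{\epsilon d}$. Now the hypothesis $d = O\!\left(\frac{n}{m\log n}\right)$ means $\frac{m}{dn} = \Omega\!\left(\frac{1}{\text{something}}\right)$... more to the point, $\frac{em}{\epsilon d n} \le \frac{e}{\epsilon c \log n}$ for the relevant constant, which is $\le n^{-\Omega(1)}$ raised only to a constant power $\epsilon d$ when $d$ is constant, but grows favorably when $d = \Theta(\log n)$. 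Meanwhile $\frac{en}{s} \le en$. So the base is at most $en \cdot \left(\frac{e}{\epsilon c\log n}\right)^{\epsilon d}$; when $d$ is a large enough constant (or $d = \Theta(\log n)$), $\left(\frac{1}{\log n}\right)^{\epsilon d}$ beats the factor $en$, giving base $\le n^{-\delta}$ for some $\delta > 0$. The main obstacle — and the point requiring care — is precisely this last quantitative check: making sure the exponent $\epsilon d$ is large enough relative to $\log n$ so that the neighborhood-size slack $(1-\epsilon)d$ is achievable for \emph{all} $s$ up to $t$ simultaneously, and tracking how the admissible range of $d$ and the constant hidden in $d = O(\frac{n}{m\log n})$ interact with $\epsilon$. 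I would also handle the small-$s$ regime (e.g. $s=1$, where $\cN(\cS)$ can have size as small as $1$ due to multi-edges) by noting that the proposition only claims expansion on average over sets of size up to $t$ in the $(t,l)$ sense, and for $s=1$ the bound $(1-\epsilon)d < d$ can still fail with polynomially small but nonzero probability — so I would either restrict to $l = (1-\epsilon)d$ with the understanding that isolated small sets contribute negligibly, or absorb the $s=1,2,\dots$ terms into the union bound directly since their individual failure probabilities $\left(\frac{ds}{m}\right)^{\epsilon ds}$ are already polynomially small and there are only $O(1)$ such small values.
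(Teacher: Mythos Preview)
Your approach is essentially the paper's: union-bound over left sets $\cS$ of each size $s\le t$ and control the probability of non-expansion. The paper estimates that probability by additionally union-bounding over candidate target sets $\cT\subset\Rc$ of size $(1-\epsilon)ds$ and using $\Pr[\cN(\cS)\subseteq\cT]\le\big(\tfrac{(1-\epsilon)ds}{m}\big)^{ds}$; your ``repeat edges dominated by $\mathrm{Bin}(ds,ds/m)$'' estimate is an equally standard variant that produces the same $\big(\tfrac{ds}{m}\big)^{\Theta(\epsilon d s)}$ factor, so the two routes are interchangeable.

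The one place your execution slips is the final quantitative check. You bound the bracketed base by replacing $\tfrac{en}{s}$ with its worst value $en$ (attained at $s=1$) while simultaneously replacing $\tfrac{ds}{m}$ with its worst value $\tfrac{m}{dn}$ (attained at $s=t$). Decoupling the two factors like this is too lossy: it yields $en\cdot\big(\tfrac{C}{\log n}\big)^{\epsilon d}$, and for constant $d$ the factor $(\log n)^{-\epsilon d}$ is only polylogarithmically small and cannot beat $en$ --- your sentence asserting that it does is incorrect. The fix, which is exactly what the paper does, is to keep the $s$-dependence in both factors together: write
\[
\Big(\frac{n}{s}\Big)^{s}\Big(\frac{ds}{m}\Big)^{\epsilon d s}
=\Big(\frac{dn}{m}\Big)^{s}\Big(\frac{ds}{m}\Big)^{\epsilon d s - s},
\]
and only then apply $s\le t=\tfrac{m^2}{d^2 n}$ (i.e.\ $\tfrac{ds}{m}\le\tfrac{m}{dn}$) to obtain the uniform bound $\big(\tfrac{m}{dn}\big)^{\epsilon d s-2s}$, which is $o(1)$ once $\epsilon d>2$ and $\tfrac{m}{dn}=O(1/\log n)$. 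Your separate worry about the $s=1$ case is then automatically subsumed; no special treatment is needed, but the implicit constraint $\epsilon d>2$ (hence $d$ at least a moderate constant for the $\epsilon\le 1/4$ used downstream) is genuinely required, and is equally tacit in the paper's own proof.
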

\begin{proof}
Let's consider a set $\cS \subseteq \Lc$ such that $|\cS| = s \leq \frac{m^2}{d^2 n}$. Let $\cT \subseteq \Rc$ be a set of right vertices such that $|\cT| < (1 - \epsilon)ds$. The probability that $\cN(\cS) \subseteq \cT$ is upper bounded by 
$\left(\frac{(1 - \epsilon)ds}{m}\right)^{ds}.$
Now, taking the union bound over all the sets $\cS \subseteq \Lc$ such that $|\cS| = s$ and the sets $\cT \subseteq \Rc$ such that $|\cT| < (1 - \epsilon)ds$, the probability $P_s$ that the the graph $\cG$ has a non-expanding set of size $s$,  is upper bounded as follows. 
\begin{align}
\label{eq:expan1}
P_s &\leq {n \choose s}{m \choose (1 - \epsilon)ds}\left({(1 - \epsilon)ds}/{m}\right)^{ds} & \nonumber \\
& \leq e^{s + (1 - \epsilon)ds}\left({n}/{s}\right)^{s}\left({(1 - \epsilon)ds}/{m}\right)^{\epsilon ds}.
\end{align}
We can rewrite \eqref{eq:expan1} as, 
\begin{align}
P_s &\leq  
e^{s + (1 - \epsilon)ds}\left({d n}/{m}\right)^s\left({ds }/{ m}\right)^{\epsilon ds - s}.
\end{align}
Now, using our assumption that $s \leq \frac{m^2}{d^2 n}$, we obtain that 
\begin{align}
\label{eq:expan2}
P_s &\leq 
e^{s + (1 - \epsilon)ds}\left({m}/{d n}\right)^{\epsilon d s - 2s}.
\end{align}
Using union bound, 
we have that $\cG$ is not an $\Big(\frac{m^2}{d^2 n}, (1 - \epsilon)d\Big)$-expander with probability at most
\begin{align}
\label{eq:final_expan}
\sum_{s = 1}^{\frac{m^2}{d^2 n}} P_s \leq  \frac{m^2}{d^2 n} e^{s + (1 - \epsilon)ds}\left(\frac{m}{d n}\right)^{\epsilon d s - 2s}.
\end{align}
Now, for large enough $n$, the R.H.S. of \eqref{eq:final_expan} vanishes as we have $\frac{m}{dn} = O(\frac{1}{\log n})$
\end{proof}

\subsubsection{Iterative decoding algorithm}
\label{sec:iterative}

Remember that during the recall phase we are given an $n$-length observation vector $\bfy$ which is noisy version of one of the message vectors from the dataset $\Mc$, i.e.,
\begin{align}
\bfy  = \bfx + \bfe,~\text{for some}~\bfx \in \Mc.
\end{align}
Assuming that we have exactly learnt the $m \times n$ matrix $B$ during the learning phase of the associative memory (as described in Sec.~\ref{sec:learning}), we obtain an $m$-length vector as follows. 
\begin{align}
\bfz = B\bfy  = B(\bfx + \bfe) = B\bfe,
\end{align}
where the last equality follows as we have $\bfx \in \Mc = \{\bfx \in \reals^n~:~B\bfx = 0\}$. Note that we have reduced the problem of recovery of the correct message vector $\bfx$ from $\bfy$ to the task of recovering $\bfe$ from $\bfz$. Assuming that the error vector $\bfe$ satisfies certain sparsity constraint, the latter problem is exactly the problem of recovering the sparse vector $\bfe$ from its linear measurements via the measurement matrix $B$. As shown in Proposition~\ref{prop:expander}, w.h.p., the matrix $B$ corresponds to the adjacency matrix of an expander graph. In \cite{JXHC09}, Jafarpour et al. have adapted the iterative error correction algorithm for expander codes from \cite{SS96} to the problem of sparse recovery problem when the measurement matrix corresponds the adjacency matrix of a good expander graph. Here we propose to employ this iterative algorithm to recover $\ev$ from $\bfz$. The algorithm requires calculation of {\em gap} for each of the linear constraints defined by the matrix $B$ (or rows of the matrix $B$) which we formally define below. 

\begin{definition}
Let $\bfe$ be an error vector and $\bfz = B\bfe$. Given an estimate $\widehat{\bfe}$ for $\bfe$, for each linear constraint indexed by $i \in [m]$, we define a gap $g_i$ as follows. 
\begin{align}
\label{eq:gap_val}
g_i = z_i - \sum_{j = 1}^nB_{i,j}\widehat{e}_j.
\end{align}
\end{definition}

We describe the algorithm in Fig.~\ref{fig:iterative} and present the theoretical guarantees for the performance of the algorithm from \cite{JXHC09} as follows.

\begin{figure}[t!]
\algrule[1pt]
\textbf{Expander decoding algorithm}
\algrule[1pt]
\begin{algorithmic}[1]
\REQUIRE The vector $\bfz = B\bfe$ and the matrix $B$.
\STATE Define $\Nc_j := \{i \in [m]~:~B_{i,j} \neq 0 \}$~$\forall~j \in [n]$.
\STATE Initialize $\widehat{\bfe} = 0$.
\IF{$\bfz = B\widehat{\bfe}$}
\STATE End the decoding and output $\widehat{\bfe}$.
\ELSE 
\STATE Find an index $j \in [n]$ such that the multiset $\{\frac{g_{i}}{B_{i,j}}\}_{i \in \Nc_j}$ has at least $(1 - 2\epsilon)d$ identical elements, say $\delta$. Here, $g_i$ is the gap (cf.~\eqref{eq:gap_val}) of the constraint defined by the $i$th row of $B$.
\STATE Set $\widehat{e}_j \leftarrow \widehat{e}_j + \delta$ and go to 2.
\ENDIF
\end{algorithmic}
\algrule[1pt]
\caption{Recovery algorithm for sparse vector from expander graphs based measurement matrix~\cite{JXHC09}.}
\label{fig:iterative}
\end{figure}

\begin{proposition}[\cite{JXHC09}]
\label{prop:iterative}
Let $B$ be an $m \times n$ matrix which is the adjacency matrix for a $(2k, (1 - \epsilon)d)$ expander bipartite graph with $\epsilon \leq \frac{1}{4}$.
Then, given the measurement vector $\bfz = B\bfe$ for any $k$-sparse vector $\bfe$, the expander decoding algorithm (cf.~Fig.~\ref{fig:iterative}) successfully recovers $\bfe$ in at most $2k$ iterations.
\end{proposition}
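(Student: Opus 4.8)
The plan is to carry out the standard expander-decoding analysis, phrased in terms of the residual $\bfv := \bfe - \widehat\bfe$ and its support $F := \supp(\bfv)$. The first, purely bookkeeping, observation is that $g_i = z_i - \sum_{j} B_{i,j}\widehat e_j = (B\bfv)_i$, so the gap vector equals $B\bfv$; a constraint $i$ is ``satisfied'' exactly when $(B\bfv)_i = 0$, and the loop in Fig.~\ref{fig:iterative} terminates precisely when $B\bfv = 0$. Since $\widehat\bfe$ is initialized to $0$, we start with $F = \supp(\bfe)$, hence $|F| \le k$.

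The whole argument hinges on the unique-neighbor consequence of expansion: for every $S \subseteq \Lc$ with $|S| \le 2k$, the $d|S|$ edges leaving $S$ reach $|\cN(S)| \ge (1-\epsilon)d|S|$ distinct vertices, so at most $\epsilon d|S|$ of them are ``collision'' edges and at least $(1-2\epsilon)d|S|$ reach vertices touched by $S$ only once; averaging, some $j \in S$ has at least $(1-2\epsilon)d$ unique neighbors. Applying this with $S = F$ shows the algorithm never stalls while $\bfv \ne 0$ and $|F| \le 2k$: for a coordinate $j \in F$ with $\ge (1-2\epsilon)d$ unique neighbors, each such neighbor $i$ has $(B\bfv)_i = B_{i,j}v_j$, hence $g_i/B_{i,j} = v_j$, so the multiset $\{g_i/B_{i,j}\}_{i \in \Nc_j}$ contains at least $(1-2\epsilon)d$ copies of $v_j$ and line~6 succeeds (with this $j$ or some other admissible one). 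Applying the same fact at termination shows that $B\bfv = 0$ together with $|F| \le 2k$ forces $\bfv = 0$ — a nonzero residual of support at most $2k$ has a coordinate with $\ge (1-2\epsilon)d \ge 1$ unique neighbors and therefore a nonzero gap — so once the loop exits it outputs $\widehat\bfe = \bfe$.

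What remains is to prove (a) the invariant $|F| \le 2k$ persists throughout, and (b) the loop exits within $2k$ iterations, and for both I would track $|F|$. In an update $\widehat e_j \leftarrow \widehat e_j + \delta$ — we may assume $\delta \ne 0$, as otherwise the step is vacuous — the residual changes only in coordinate $j$ (replacing $v_j$ by $v_j - \delta$), the $\ge (1-2\epsilon)d$ agreeing constraints of $\Nc_j$ acquire zero gap, and the remaining $\le 2\epsilon d$ constraints of $\Nc_j$ change arbitrarily. The threshold $(1-2\epsilon)d$ is tuned to $\epsilon \le \tfrac14$ so that a coordinate can pass the test of line~6 essentially only when correcting it removes it from the residual support: if $j \notin F$, the $\ge (1-2\epsilon)d$ agreeing constraints would have to be neighbors that $j$ shares with $F$, and feeding $S = F \cup \{j\}$ into the unique-neighbor count (using the invariant to keep $|S| \le 2k$) bounds this overlap and rules such ``wrong'' updates out, or in the boundary regime caps how often they occur. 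Hence each iteration decreases $|F|$, giving (a) at once and (b) with at most $|\supp(\bfe)| \le k \le 2k$ rounds; the robust substitute, used if $|F|$ is allowed to grow momentarily, is the potential ``number of unsatisfied constraints'', which starts below $d\,|\supp(\bfe)|$ and drops by at least $(1-4\epsilon)d$ per step (strictly positive once $\epsilon < \tfrac14$, with the boundary case needing a marginally finer count).

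The main obstacle is exactly (a)/(b): controlling the support of the running estimate $\widehat\bfe$. The subtlety is that the algorithm as stated is free to update a coordinate outside the current residual support, momentarily enlarging $F$ and spoiling clean monotonicity; the heart of the proof is to show that the $(1-2\epsilon)d$ agreement threshold together with $(2k, (1-\epsilon)d)$-expansion and $\epsilon \le \tfrac14$ either excludes such updates or bounds their cumulative effect, so that $|F|$ never exceeds $2k$ and the decoder converges within $2k$ rounds. The rest — the bookkeeping, the well-definedness of each step, and the final correctness given the invariant — follows routinely from the unique-neighbor property.
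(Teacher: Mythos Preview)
The paper does not prove Proposition~\ref{prop:iterative}; it is quoted verbatim from \cite{JXHC09} and used as a black box, so there is no in-paper proof to compare your proposal against. Your sketch is the standard Sipser--Spielman/Jafarpour et al.\ argument, and the pieces you lay out --- identifying the gaps with $B\bfv$, the unique-neighbor averaging to guarantee a valid update, the identifiability of $\bfv$ from $B\bfv$ once $|\supp(\bfv)|\le 2k$, and the potential ``number of unsatisfied constraints'' --- are exactly the ingredients used in \cite{JXHC09}.

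One comment on the part you flag as the obstacle. The clean way to close (a)/(b) simultaneously is to combine the two inequalities you already have: as long as $|F|\le 2k$, every unique neighbor of $F$ is unsatisfied, so the number $U$ of unsatisfied constraints obeys $U\ge (1-2\epsilon)d\,|F|$; on the other hand $U$ starts at most $d\,|\supp(\bfe)|\le dk$ and, by your own accounting, never increases (each nonvacuous step turns at least $(1-2\epsilon)d$ unsatisfied constraints into satisfied ones and can spoil at most $2\epsilon d$). Hence inductively $|F|\le U/((1-2\epsilon)d)\le k/(1-2\epsilon)\le 2k$ for $\epsilon\le \tfrac14$, which is precisely invariant (a) without ever needing to rule out ``wrong'' updates. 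For (b), the same bookkeeping gives a per-step drop of at least $(1-4\epsilon)d$ in $U$, which yields the $2k$-iteration bound when $\epsilon<\tfrac14$; at the boundary $\epsilon=\tfrac14$ you need the slightly finer observation that any index $j$ with $(1-2\epsilon)d=d/2$ agreeing nonzero gaps must have those $d/2$ constraints previously unsatisfied, so the step is never a no-op and one tracks instead the multiset of satisfied constraints to get strict progress. Your proposal gestures at all of this but would benefit from stating the $U\ge (1-2\epsilon)d|F|$ lower bound explicitly, since that single line is what converts your hedged discussion of ``wrong'' updates into a one-paragraph proof of the invariant.
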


We now employ Proposition~\ref{prop:iterative} to characterize the error correction performance of the designed associative memories during the recall phase. 

\begin{theorem}
\label{thm:recall}
Let $B$ be the $m \times n$ matrix generated by the sparse-sub-Gaussian model described in Sec.~\ref{sec:sys} and $\Mc$ denote the dataset associated with the matrix $B$. Then, with probability at least $1 -o(1)$, the recall phase based on the iterative decoding algorithm described in Fig.~\ref{fig:iterative} can correct at least $\frac{m^2}{2d^2 n}$ adversarial errors.
\end{theorem}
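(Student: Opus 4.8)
The plan is to combine Proposition~\ref{prop:expander} (expansion of the bipartite graph $\cG_B$) with Proposition~\ref{prop:iterative} (correctness of the expander decoding algorithm on a good expander). First I would fix the error parameter $\epsilon$ in Proposition~\ref{prop:iterative} to a value with $\epsilon \le \tfrac14$; say $\epsilon = \tfrac14$. By Proposition~\ref{prop:expander} (invoked with this $\epsilon$, which is legitimate since its only hypothesis is $d = O(\tfrac{n}{m\log n})$, and here $m = \Theta(\tfrac{n}{\log n})$ so $\tfrac{n}{m\log n} = \Theta(1)$ covers the range $c' \le d \le c''\log n$), with probability $1 - o(1)$ the graph $\cG_B$ is a $\big(\tfrac{m^2}{d^2 n},\,(1-\epsilon)d\big)$-expander. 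Set $k := \tfrac{m^2}{2 d^2 n}$, so that $2k = \tfrac{m^2}{d^2 n}$ is exactly the expansion threshold of Proposition~\ref{prop:expander}; thus, on this high-probability event, $\cG_B$ is a $\big(2k, (1-\epsilon)d\big)$-expander with $\epsilon = \tfrac14$, which is precisely the input hypothesis required by Proposition~\ref{prop:iterative}.

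Next I would unwind the recall-phase reduction already spelled out in Sec.~\ref{sec:iterative}: given $\bfy = \bfx + \bfe$ with $\bfx \in \Mc$, the decoder forms $\bfz = B\bfy = B\bfe$ (using $B\bfx = 0$), so recovering $\bfx$ is equivalent to recovering $\bfe$ from $\bfz$. If at most $k = \tfrac{m^2}{2d^2 n}$ coordinates of $\bfy$ were corrupted adversarially, then $\bfe$ is $k$-sparse, and Proposition~\ref{prop:iterative} guarantees that the expander decoding algorithm of Fig.~\ref{fig:iterative} outputs $\bfe$ exactly in at most $2k$ iterations. Subtracting $\widehat{\bfe} = \bfe$ from $\bfy$ returns $\bfx$. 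Combining with the probability bound from Proposition~\ref{prop:expander} gives the claimed $1 - o(1)$ success probability and the error-correction bound of $\tfrac{m^2}{2d^2 n}$, completing the proof.

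The only genuinely delicate point — and the one I would be most careful about — is matching the parameter regimes and the definition of ``adversarial error'' across the two propositions. Concretely: (i) checking that the degree range $c' \le d \le c''\log n$ assumed for $B$ in Theorem~\ref{thm:design_eff} actually falls inside the hypothesis $d = O(\tfrac{n}{m\log n})$ of Proposition~\ref{prop:expander} when $m = \Theta(\tfrac{n}{\log n})$; (ii) noting that Proposition~\ref{prop:iterative} tolerates \emph{any} $k$-sparse $\bfe$ regardless of how the support or values are chosen, which is exactly what "adversarial errors" means here, so no extra randomness-over-errors argument is needed; and (iii) confirming that the expander decoding algorithm as written is neurally feasible (local gap computations at check nodes, local updates at variable nodes) and terminates in $O(k) = O(n/\mathrm{polylog}\,n)$ iterations, so the recall phase meets the stated complexity. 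Everything else is bookkeeping: the high-probability event is the one from Proposition~\ref{prop:expander}, and on that event the deterministic guarantee of Proposition~\ref{prop:iterative} finishes the argument.
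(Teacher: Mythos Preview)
Your proposal is correct and follows essentially the same route as the paper: invoke Proposition~\ref{prop:expander} to get that $\cG_B$ is a $\big(\tfrac{m^2}{d^2 n},(1-\epsilon)d\big)$-expander with probability $1-o(1)$, set $k=\tfrac{m^2}{2d^2 n}$, and then apply Proposition~\ref{prop:iterative} to conclude that any $k$-sparse error vector $\bfe$ is recovered exactly from $\bfz=B\bfe$, whence $\bfx=\bfy-\bfe$. The paper's own proof is exactly this two-line combination, without your added commentary on fixing $\epsilon=\tfrac14$, the adversarial nature of $\bfe$, or neural feasibility; your version is simply more explicit.

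One small caution on your point~(i): with $m=\Theta(n/\log n)$ you correctly compute $\tfrac{n}{m\log n}=\Theta(1)$, but then the stated hypothesis $d=O(\tfrac{n}{m\log n})$ of Proposition~\ref{prop:expander} would literally force $d=O(1)$, not $d\le c''\log n$. This is a looseness in the paper's statement of Proposition~\ref{prop:expander} (its proof actually uses $\tfrac{m}{dn}=O(1/\log n)$, which for $m=\Theta(n/\log n)$ only requires $d\ge$ constant), not a flaw in your overall strategy, and the paper's own proof of Theorem~\ref{thm:recall} glosses over this entirely.
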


\begin{proof}
It follows from Proposition~\ref{prop:expander} that with probability at least $1 - o(1)$, the matrix $B$ corresponds to the adjacency matrix of an $\Big(\frac{m^2}{d^2 n}, (1 - \epsilon)d\Big)$-expander graph. Combining the expansion parameters for this expander graph with the result in Proposition~\ref{prop:iterative}, we obtain that the iterative decoding algorithm (cf.~Fig.~\ref{fig:iterative}) can recover the error vector $\bfe$ from $\bfz = B\bfe$ as long as $\bfe$ has at most $\frac{m^2}{2 d^2 n}$ non-zero coordinates. Given $\bfy$ and $\bfe$, it is straightforward to obtain the correct message vector as $\bfx = \bfy - \bfe$. This completes the proof.
\end{proof}

\section{Proof sketch of Theorem \ref{thm:bin_eff}: Associative memory storing binary vectors}
\label{sec:binary}
Since the graph defined by $B$ is still an expander (with edge weights $\{+1, -1\}$), for the recall phase we rely
on the same expander decoding algorithm. We just want to guarantee that  $|\cM| = |\{\bfx \in \{\pm 1\}^n: B\bfx =0 \}|$
is of size about $\exp(n-\alpha n \log(d \log n)/\log n)$ w.h.p. The algorithm to learn $B$ is same as that of Theorem \ref{thm:design_eff}.

Instead of the random model that we have considered in Sec.~\ref{sec:bin}, consider a random matrix $B \in \{+1, 0, -1\}^{m \times n}$ whose
each row has independently and uniformly chosen $d'$ nonzero ($\{+1, -1\}$) values. This model allows us to come up with a straight-forward analysis of number of binary vectors in the null-space, while the original model gives the same estimate but with significantly lengthier analysis, that we omit for the interest of space. Note that $d' \sim d\frac{n}{m}$ w.h.p. Now for a randomly and
uniformly chosen $\pm1$ vector $\bfy$ of length $n$, and for some constant $c'>0,$
\[
\pr{B\bfy =0} = \left({\binom{d'}{\frac{d'}{2}}}/{2^{d'}}\right)^m \ge \Big(\frac{1}{c'd'}\Big)^{m/2}.
\]
This means  $\ep{|\cM|} \ge 2^{n}\cdot \Big({1}/{(c'd')}\Big)^{m/2} = 2^{n - \frac{m}{2}\log (c'd')}$. Substituting, $m  = c\frac{n}{\log n}$, we get the promised size of $\cM$.

\section{Simulation results}
\label{sec:simulations}

Though our main contribution is theoretical,
in this section we evaluate the proposed associative memory on synthetic dataset to verify if our methods works. Only a representative figure is presented here (Fig.~\ref{fig:error_recall}). We consider three sets of system parameters $(m, n, d)$ for the dataset to be stored. For each set of parameters, we first generate an $m \times n$ random matrix $B$ according to the sparse-sub-Gaussian model (cf. Sec.~\ref{sec:sys}). Each non-zero entry of the matrix $B$ is drawn uniformly at random from the set $\{\pm 1, \pm 2, \pm 3\}$. We then generate multiple message vectors which belong to the subspace  orthogonal to all the rows of the matrix $B$ and provide the learning phase with these vectors. Given these vectors we employ the dictionary learning based approach described in Sec.~\ref{sec:learningB} to obtain an estimate $\widehat{B}$ for the matrix $B$. As guaranteed by Theorem~\ref{thm:learning}, in our simulations, $\widehat{B}$ contains all the rows of the original matrix $B$ (however, in a different order). For all three sets of parameters under consideration, we then utilize the estimate $\widehat{B}$ to evaluate the performance of the expander decoding based recall phase (cf. Sec.~\ref{sec:recall}). For a fixed number $E$ of errors, we generate $100$ error vectors $\ev \in \RR^n$ with the number of non-zero entries in each error vector equal to $E$. The non-zero entries in these vectors are uniformly generated from the set $\{\pm 1,\ldots, \pm 4\}$. The positions of the non-zeros entries in each of these vectors are chosen according to a uniform random permutation on the set $[n]$. 

The performance of the recall algorithm in our simulations is illustrated in Fig.~\ref{fig:error_recall} where we plot the fraction of incorrectly recovered error vectors as we increase the number of errors. As expected from Theorem~\ref{thm:recall}, increasing $d$ while keeping $m$ and $n$ fixed degrades the performance of the recall phase. On the other hand, increasing $m$ while keeping $d$ and the ratio $\frac{m}{n}$ fixed improves the performance of the recall phase. \\

\begin{figure}[t!]
\centering
\includegraphics[width=6cm]{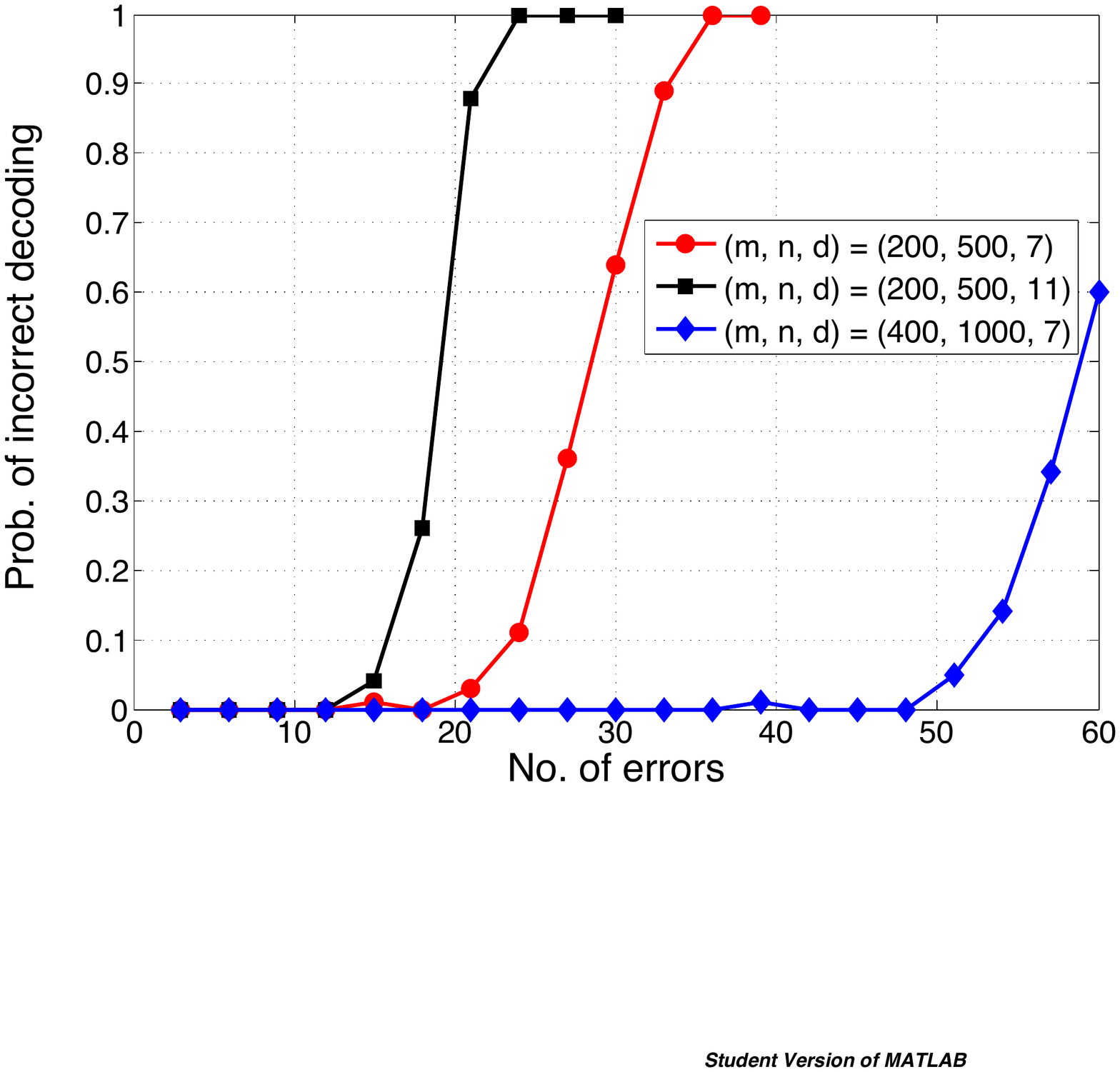}
\caption{\small Performance of recall phase for different sets of system parameters.}\label{fig:error_recall}
\end{figure} 

\noindent {\bf Concluding remarks} While we use dictionary learning as a tool in the learning phase, the model of our datasets are subspace models.
A large number of datasets on the other hand are also modeled by the {\em sparse dictionary} model (or union of subspaces). It is of interest to design associative memories, where the datasets are modeled as such. One other possible direction of future research would be to consider
a subspace model with a mixture of sparse and dense constraints, which potentially will be inclusive of larger classes of real datasets. For such datasets, under suitable assumption on the generative model, one can potentially employ the techniques of recovering planted sparse vectors in a subspace spanned by dense random sub-Gaussian vectors~\cite{DH14,QSW15} and utilize the recovered sparse constraints to design an iterative recall phase similar to the one presented in this paper. {As in the case of \cite{karbasi}, the networks (graphs) appearing in our associative memory design  share some similarities with the neural networks used for classification tasks. It is an interesting problem to further explore such connections.}

\newpage 

\bibliography{associative2016}
\bibliographystyle{plain} 

\newpage

\appendix

\begin{center}
{\large Appendix}
\end{center}

\section{The modified ER-SpUD algorithm and proof of Theorem~\ref{thm:learning}}
\label{sec:dictionary_background}
In \cite{SWW12}, Spielman et al. consider the following problem of exact dictionary learning. Let $D \in \reals^{m \times m}$ be an invertible matrix also referred to as the dictionary. Given $n$ observations
\begin{align}
\bfu_j = D\bfv_j~\text{for}~j \in [n], 
\end{align} 
the task is to exactly learn the dictionary $D$ and the coefficient matrix
\[
V = [\bfv_1, \bfv_2,\ldots, \bfv_n] \in \reals^{m \times n}.
\]
Spielman et al. assume that the coefficient vectors of the observation are randomly generate so that the entries of the coefficient matrix $V$ are independent and identically distributed~\cite{SWW12}. In particular, let 
\[
V_{i, j} = \eta_{i, j}R_{i, j},
\]
where $\eta_{i, j} \in \{0, 1\}$ and $R_{i, j} \in \reals$ are independent random variables. In particular, for some constant $\alpha$, they assume that
\begin{align}
\pr{\eta_{i,j} = 1} = 1 - \pr{\eta_{i, j}= 0} = \theta \in  \Big[\frac{2}{m}, \frac{\alpha}{\sqrt{m}}\Big],
\end{align}
and $R_{i, j}$ is a zero mean sub-Gaussian random variable such that
\[
\ep{|R_{i,j}|} \geq \frac{1}{10}~~~\text{and}~~~\pr{|R_{i,j}| \geq t} \leq 2\exp(-t^2/2).
\]

This random generative model for the coefficients $V$ is referred to as {\em Bernoulli-sub-Gaussian model}.
Under the Bernoulli-sub-Gaussian model, Spielman et al. show that the dictionary learning problem is well defined. In particular, as long as $n \geq \Omega(m\log m)$, for an alternative representation of the observations 
\[
U = [\bfu_1, \bfu_2,\ldots, \bfu_n] = D'V',
\]
where $A' \in \reals^{m \times m}$ is an invertible matrix and $V' \in \reals^{m \times n}$ is coefficient matrix with the per-column sparsity bounded by that of the original coefficient matrix $V$, we have 
\[
D' = D\Pi\Lambda
\]
and
\[
V' = \Lambda^{-1}\Pi V.
\]
Here, $\Lambda \in \reals^{m \times m}$ and $\Pi \in \reals^{m \times m}$ denote a diagonal matrix and a permutation matrix, respectively. This implies that for $n \geq \Omega(m\log m)$, any other representation of the observations which is explained by a square dictionary and the sparsest coefficient vectors have its dictionary and coefficient matrix as some permutation and scaling of the columns and rows of the original dictionary $D$ and the coefficient matrix $V$, respectively. Furthermore, Spielman et al. also present an algorithm for the exact dictionary learning problem that recovers the dictionary $D$ (up to scaling and permutations of the columns of $D$) and the coefficient matrix $V$  (up to scaling and permutations of the rows of $V$) provided that $n = O(m^2\log^2m)$ samples. Recently, Adamczak further improve the sample complexity of the dictionary learning algorithm\footnote{In \cite{A16}, Adamczak analyze a slight modification of the dictionary learning algorithm proposed by Spielman et al.} to $n = O(m\log m)$~\cite{A16}. Since we rely on the dictionary learning algorithm in this paper, we describe the algorithm in Fig.~\ref{fig:ERSpUD} and present the exact recovery guarantees from \cite{A16}.

\begin{figure}[t!]
\algrule[1pt]
\textbf{Modiefied ER-SpUD (DC):} Exact recovery of sparsely-used dictionaries using the sum of two columns of $U$ as constraint vectors.
\algrule[1pt]
\begin{algorithmic}[1]
\REQUIRE $n$ observations $U = \left[\bfu_1, \bfu_2,\ldots, \bfu_n\right] \in \reals^{m \times n}$.
\STATE Initialize the set $\cV = \emptyset$.
\FOR{$i = 1,\ldots, n-1$}
\FOR{$j = i+1,\ldots, n$}
\STATE Let $\bfr_{ij} = \bfu_i + \bfu_j$. 
\STATE Solve ${\rm minimize}_{\bfw \in \RR^m} \|\bfw^TU\|_1$ subject to $\bfr_{ij}^T\bfw = 1$, and set $\bfs_{ij} = \bfw^TY \in \RR^n$.
\STATE $\cV = \cV \cup \{\bfs_{ij}\}$
\ENDFOR
\ENDFOR
\FOR{$i = 1,\ldots, m$}
\STATE \textbf{Repeat}
\STATE $\bfv_i \leftarrow {\rm argmin}_{\bfv \in \cV}\|\bfv\|_0$, breaking ties arbitrarily
\STATE $\cV = \cV \backslash \{\bfv_i\}$.
\STATE \textbf{Until} ${\rm rank}([\bfv_1, \bfv_2,\ldots, \bfv_i]) = i$.
\ENDFOR
\ENSURE $V = [\bfv_1, \bfv_2,\ldots, \bfv_m]^T$  and $D = UU^T(VV^T)^{-1}$.
\end{algorithmic}
\algrule[1pt]
\caption{Description of the dictionary learning algorithm from \cite{A16}.}
\label{fig:ERSpUD}
\end{figure}

\begin{proposition}
There exists absolute constants $c, \alpha \in (0, \infty)$ such that if 
$$
\frac{2}{m} \leq \theta \leq \frac{\alpha}{\sqrt{m}}
$$
and $V$ follows the Bernoulli-sub-Gaussian model with parameter $\theta$, then for $n \geq c m\log m$, with probability at least $1 - 1/n$ the modified ER-SpUD algorithm (cf.~Fig.~\ref{fig:ERSpUD}) successfully recovers all the rows of $V$, i.e., multiples of all the rows of $U$ are present among the set $\cV$. 
\end{proposition}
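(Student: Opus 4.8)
\medskip
\noindent\textbf{Proof proposal.}
The plan is to reproduce the ER-SpUD analysis of Spielman et al.~\cite{SWW12} equipped with the sharpened deviation estimates of Adamczak~\cite{A16}, the first move being to eliminate the dictionary $D$ entirely. Writing $U = DV$ and substituting $\bfw = D^{-T}\bfx$ in the $\ell_1$ program on line~5 of Fig.~\ref{fig:ERSpUD}, one has $\bfw^T U = \bfx^T V$ and $\bfr_{ij}^T\bfw = (\bfv_i+\bfv_j)^T\bfx$, so that program becomes
\begin{align}
\text{minimize}_{\bfx\in\RR^m}\ \|\bfx^T V\|_1 \quad\text{subject to}\quad (\bfv_i+\bfv_j)^T\bfx = 1 ,
\end{align}
which depends on $V$ alone; $D$ re-enters only on the last line through $D = UU^T(VV^T)^{-1}$, applied to a recovered $V$. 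Call a pair $(i,j)$ \emph{good for coordinate $k$} if $\supp(\bfv_i)\cap\supp(\bfv_j)=\{k\}$. For such a pair the natural candidate $\bfx = \big((v_i)_k+(v_j)_k\big)^{-1}\bfe_k$ is feasible, with objective value $\|\bfe_k^T V\|_1/|(v_i)_k+(v_j)_k|$ — i.e.\ a nonzero scalar multiple of the $k$-th row of $V$, exactly the vector we want to see enter $\cV$ as $\bfs_{ij}$. So the whole proof reduces to two claims about $V$: (a) every coordinate $k\in[m]$ is good for at least one of the processed pairs, and (b) for every good pair the candidate is the \emph{unique} optimum of the program above.

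First I would dispose of (a), a coupon-collector statement. Under the Bernoulli-sub-Gaussian model with $\theta\in[2/m,\alpha/\sqrt m]$, a fixed pair of columns is good for a fixed $k$ with probability $\Theta(\theta^2)$: each of the two supports contains $k$ with probability $\theta$, and conditionally the two residual supports are disjoint with probability $(1-\theta^2)^{m-1}=\Omega(1)$ since $\theta^2 m\le\alpha^2$. The algorithm loops over all $\binom n2$ pairs, and $\binom n2\theta^2=\Omega(\log^2 m)$ once $n\ge cm\log m$ and $\theta\ge 2/m$, so the expected number of pairs good for $k$ is $\Omega(\log^2 m)$; a correlation-aware lower-tail bound (an extended second-moment / Janson-type estimate, which is where the constant $c$ must be taken large to beat a union bound over the $m$ coordinates) then gives that w.h.p.\ every $k$ is covered.

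The crux — and the step I expect to be the main obstacle — is (b): showing that for \emph{every} good pair the optimum really is $\big((v_i)_k+(v_j)_k\big)^{-1}\bfe_k$, so that $\bfs_{ij}$ is a genuine scaled row rather than a spurious sparse combination. The mechanism is that $\bfe_k^T V$ is supported on only the $\approx\theta n$ columns whose support contains $k$, so it is cheap, whereas any feasible $\bfx$ bounded away from the line $\RR\bfe_k$ forces $\bfx^T V$ to spread over $\Omega(n)$ columns of comparable magnitude and is therefore strictly more expensive. Turning this into a proof — and, critically, making it \emph{uniform} over the constraint surface — requires (i) a two-sided sub-Gaussian deviation bound for $\|\bfx^T V\|_1$ around $\Exp\|\bfx^T V\|_1$ and (ii) anti-concentration of the individual inner products $\langle\bfx,\bfv_l\rangle$ so that the spread-out competitor cannot collapse; I would import these concentration lemmas from \cite{A16} essentially verbatim, checking only that they use nothing about $V$ beyond independence across columns. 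It is precisely the replacement of a crude $\epsilon$-net union bound, which costs $n=O(m^2\log^2 m)$ in \cite{SWW12}, by a generic-chaining / matrix-deviation argument that lets one take $n=O(m\log m)$ here.

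Finally, granting a scaled copy of every row of $V$ in the collected $\{\bfs_{ij}\}$, the greedy phase (lines~9--14) repeatedly extracts a minimal-$\ell_0$ vector subject to keeping the selected set linearly independent. Since w.h.p.\ the sparsest vectors in the row space of $V$ are exactly the scalar multiples of its rows — the uniqueness statement of \cite{SWW12}, already valid for $n=\Omega(m\log m)$ — this procedure terminates with a permuted, rescaled copy of the complete row set of $V$, whereupon $D = UU^T(VV^T)^{-1}$ returns $D$ up to the matching permutation and diagonal scaling. Summing the three $O(1/n)$ failure probabilities (coverage, uniqueness of the per-pair optimum, uniqueness of sparse vectors in the row space) gives the stated $1-1/n$ success probability; the extension to the sparse-sub-Gaussian matrix $B$ of Sec.~\ref{sec:sys} needed for Theorem~\ref{thm:learning} requires only cosmetic changes, since the argument used only per-column independence.
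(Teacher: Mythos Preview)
The paper does not prove this proposition. It is stated in Appendix~\ref{sec:dictionary_background} purely as a quotation of Adamczak's result \cite[Theorem~1.1]{A16}, with no accompanying argument; the paper's own analytic work begins only in Appendix~\ref{appen:thm_learning}, where it \emph{adapts} Adamczak's proof to the sparse-sub-Gaussian model in order to establish Theorem~\ref{thm:learning}. So there is no ``paper's own proof'' to compare against here.

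That said, your sketch is a faithful high-level reconstruction of the Spielman--Wang--Wright / Adamczak argument: the change of variables $\bfw = D^{-T}\bfx$ to remove $D$, a coverage step guaranteeing that every row index is hit by some favorable constraint pair, a uniform $\ell_1$-deviation step showing that for such pairs the optimizer is exactly a scaled row of $V$, and finally the greedy extraction using the sparsest-vector uniqueness result. Two small cautions. First, your definition of a ``good pair'' ($\supp(\bfv_i)\cap\supp(\bfv_j)=\{k\}$) is not the one actually used in \cite{SWW12,A16}; there the relevant event is that the \emph{combined} support $\supp(\bfv_i+\bfv_j)$ is small (of size $O(1/\theta)$ or so), and the argument shows that for any such pair the minimizer is $\propto \bfe_k$ for \emph{some} $k$ in that support --- the coverage step then tracks which $k$'s get hit. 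Your intersection condition is more restrictive and would make the coupon-collector count tighter than necessary. Second, the uniqueness step (b) is not merely ``$\bfe_k$ beats vectors bounded away from $\RR\bfe_k$'': one must compare $\bfe_k$ against \emph{every} feasible $\bfx$ supported on the constraint's support set, which is where the anti-concentration lemma and the careful conditioning in \cite{A16} (handling the dependence between the constraint columns $\bfv_i,\bfv_j$ and the objective $\|\bfx^T V\|_1$) do real work. Your last paragraph's remark that the extension to the sparse-sub-Gaussian model ``requires only cosmetic changes'' is exactly what the paper argues in Appendix~\ref{appen:thm_learning}, via Lemma~\ref{lem:modifie_lemma}.
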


\subsection{Proof of Theorem~\ref{thm:learning}}
\label{appen:thm_learning}

In this section we highlight the proof of Theorem~\ref{thm:learning} which provides the guarantees for the exact recovery of the matrix $B$ using the learning algorithm described in Fig.~\ref{fig:ERSpUD}. In \cite[Theorem~1.1]{A16}, Adamczak establishes the analogue of Theorem~\ref{thm:learning} for $m \times n$ matrices generated by the Bernoulli-sub-Gaussian model (cf.~Sec.~\ref{sec:dictionary_background}). Theorem~\ref{thm:learning} can be established by suitably modifying the analysis of Adamczak which comprises four main steps as highlighted in \cite[Sec.~2.1]{A16}. Due to the small differences between the sparse-sub-Gaussian model (cf.~Sec.~\ref{sec:sys}) for the matrix $B$ considered in this paper and the Bernoulli-sub-Gaussian model from \cite{A16}, these steps continue to work after small modifications in the analysis. In the rest of this section, we demonstrate this by establishing Lemma~\ref{lem:modifie_lemma} for the sparse-sub-Gaussian model which is analogue to \cite[Lemma~2.4]{A16} for the Bernoulli-sub-Gaussian model. The analogue to other key lemmas from \cite{A16} can be similarly obtained.

Let's first define the required notation. In what follows, for $p \geq 1$, 
\begin{align}
\|\bfv\|_p := \Big(\sum_{i = 1}^mv_i^p\Big)^{1/p} \nonumber 
\end{align} 
denotes the $\ell_p$-norm of the vector $\bfv \in \RR^m$.  Moreover, we use $B_1^m \subset \RR^m$ to denote the set of $m$-length vectors with unit $\ell_1$-norm, i.e.,
\begin{align}
B_1^m := \big\{\bfv \in \RR^m~:~\|\bfv\|_1 = 1\big\}. \nonumber
\end{align}

In \cite{A16}, Adamczak proves the following concentration result using Bernstein's inequality and Talagrand's contraction principle. Here, we restate this result as it is utilized in the proof of Lemma~\ref{lem:modifie_lemma} below.

\begin{proposition}{\cite[Proposition 2.1]{A16}}
\label{prop:bern}
Let $R_1, R_2,\ldots, R_n \in \RR^m$ and $\xi_1, \xi_2,\ldots, \xi_n \in \{0, 1\}^m$ be two sets of independent random vectors. Assume that for some constant $L$, we have 
\begin{align}
\ep{e^{|R_{i, j}|/L}} \leq 2~~~~~\forall~1 \leq i \leq m,~1 \leq j \leq n.
\end{align}
Furthermore, assume that we have
\begin{align}
\pr{\xi_{i, j} = 1} \leq \theta~~~~~\forall~1 \leq i \leq m,~1 \leq j \leq n.
\end{align}
Let $Z_1, Z_2,\ldots, Z_n \in \RR^m$ be $n$ random vectors defined as follows.
\begin{align}
Z_j = \big(R_{1, j}\xi_{1,j},R_{2, j}\xi_{2,j},\ldots, R_{m, j}\xi_{m, j}\big)^T~~~~~\forall~1 \leq j \leq n.
\end{align}
Consider the random variable
\begin{align}
W = {\rm sup}_{\bfv \in B^{m}_1} \left|\frac{1}{n}\sum_{j = 1}^{n}\big(\bfv^TZ_j - \ep{\bfv^TZ_j}\big)\right|.
\end{align}
Then, for some universal constant $C$ and every $q \geq \max(2, \log m)$, we have
\begin{align}
\|W\|_q \leq \frac{C}{n}\big(\sqrt{nq\theta} + q\big)L
\end{align}
and 
\begin{align}
\pr{W \geq \frac{Ce}{n}\big(\sqrt{nq\theta} + q\big)L} \leq e^{-q}.
\end{align}
\end{proposition}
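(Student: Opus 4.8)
The plan is to exploit that the supremum defining $W$ is over the $\ell_1$-ball $B_1^m$, whose extreme points are $\pm\mathbf e_i$, so $W$ collapses to a maximum of $m$ scalar empirical averages, each a normalized sum of $n$ independent mean-zero random variables to which a Bernstein-type moment inequality applies; the union over the $m$ coordinates is paid for by the hypothesis $q\ge\log m$, and the stated tail bound follows from the moment bound by Markov's inequality.

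Concretely, since $\bfv\mapsto\frac1n\sum_{j}(\bfv^TZ_j-\ep{\bfv^TZ_j})$ is linear and $B_1^m=\mathrm{conv}\{\pm\mathbf e_i:i\in[m]\}$,
\[
W=\max_{i\in[m]}\Big|\tfrac1n\sum_{j=1}^n\big(R_{i,j}\xi_{i,j}-\ep{R_{i,j}\xi_{i,j}}\big)\Big|=:\max_{i\in[m]}|S_i|.
\]
Fix $i$. The summands $X_{i,j}:=R_{i,j}\xi_{i,j}-\ep{R_{i,j}\xi_{i,j}}$ are independent and mean zero; from $\ep{e^{|R_{i,j}|/L}}\le2$ one gets $\ep{|R_{i,j}|^p}\le 2\,p!\,L^p$, and hence, using independence of $\xi_{i,j}\in\{0,1\}$ from $R_{i,j}$ together with $\pr{\xi_{i,j}=1}\le\theta$, the variance bound $\ep{X_{i,j}^2}\le\theta\,\ep{R_{i,j}^2}\lesssim\theta L^2$ and the sub-exponential bound $\|X_{i,j}\|_{\psi_1}\lesssim L$, uniformly in $i,j$ (the indicator only shrinks the variable). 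Applying the moment form of Bernstein's inequality to $\sum_{j=1}^nX_{i,j}$ — $n$ independent centered summands with variance proxy $\sigma^2\lesssim\theta L^2$ and $\psi_1$-parameter $K\lesssim L$ — yields $\big\|\sum_jX_{i,j}\big\|_q\lesssim\sqrt q\,\sqrt n\,\sigma+qK\lesssim\sqrt{nq\theta}\,L+qL$ for every $q\ge2$, so $\|S_i\|_q\le\frac{C'}{n}(\sqrt{nq\theta}+q)L$. One may instead symmetrize and invoke Talagrand's contraction principle to strip the bounded factor $\xi_{i,j}$, or truncate $R_{i,j}$ at level $\asymp Lq$ and treat the bounded and tail parts separately; these are the routes used in \cite{A16} and land at the same estimate.

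It remains to take the maximum over $i$ and pass to a tail bound. For any $q\ge1$,
\[
\|W\|_q=\Big\|\max_{i\in[m]}|S_i|\Big\|_q\le\Big(\sum_{i=1}^m\ep{|S_i|^q}\Big)^{1/q}\le m^{1/q}\max_{i\in[m]}\|S_i\|_q,
\]
and for $q\ge\log m$ we have $m^{1/q}\le e$, so $\|W\|_q\le\frac{C}{n}(\sqrt{nq\theta}+q)L$ after folding $eC'$ into $C$. Markov's inequality applied to $W^q$ at the level $t=e\|W\|_q$ then gives $\pr{W\ge\frac{Ce}{n}(\sqrt{nq\theta}+q)L}\le e^{-q}$ for every $q\ge\max(2,\log m)$.

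The main obstacle is obtaining the two terms with the correct parameters simultaneously: the variance term must carry the sparsity factor $\theta$ rather than $1$, which is exactly where one must use that $\xi_{i,j}$ is independent of $R_{i,j}$ (or condition on the $\xi$'s and control $\sum_j\xi_{i,j}$ by a Chernoff bound) when estimating $\ep{X_{i,j}^2}$, while the linear-in-$q$ term $qL$ is the unavoidable fingerprint of the merely sub-exponential — not sub-Gaussian — tails of $R_{i,j}$ and cannot be removed; keeping these two regimes separate is precisely what the moment form of Bernstein's inequality buys, and is why an off-the-shelf Hoeffding/sub-Gaussian bound does not suffice. A secondary, bookkeeping-level point is justifying the reduction to $m$ linear functionals before any concentration tool is applied: for the $\ell_1$-ball this is immediate by extremality, but it is the place where, in the more general treatment of \cite{A16}, Talagrand's contraction principle does genuine work.
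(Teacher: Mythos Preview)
The paper does not give its own proof of this proposition; it merely cites \cite{A16} and records that Adamczak establishes it ``using Bernstein's inequality and Talagrand's contraction principle.'' So there is no in-paper proof to compare against, only that two-word summary of tools.

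Your argument is correct for the proposition \emph{as stated here}. Because the inner map $\bfv\mapsto\frac1n\sum_j(\bfv^TZ_j-\ep{\bfv^TZ_j})$ is linear, the supremum over $B_1^m$ is indeed attained at $\pm\mathbf e_i$, reducing $W$ to a maximum of $m$ centered sums; Bernstein's moment inequality (with variance proxy $\lesssim\theta L^2$ coming from the independent Bernoulli mask and $\psi_1$-norm $\lesssim L$ from the sub-exponential $R_{i,j}$) gives the two-term bound $\sqrt{nq\theta}L+qL$ for each coordinate, the factor $m^{1/q}\le e$ absorbs the union, and Markov yields the tail. This is a clean route and, for the linear functional in the displayed definition of $W$, it makes Talagrand's contraction unnecessary --- which you correctly flag.

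One point worth noting: if you look at how the paper (and \cite{A16}) actually \emph{uses} this proposition in the proof of Lemma~\ref{lem:modifie_lemma}, the quantity being controlled is $\sup_{\bfv\in B_1^m}\big|\|\bfv^TX\|_1-\ep{\|\bfv^TX\|_1}\big|$, i.e.\ $\sum_j|\bfv^TZ_j|$ rather than $\sum_j\bfv^TZ_j$. With the absolute values the process is no longer linear in $\bfv$, the extreme-point reduction fails (the centered process is convex but not affine, so $\sup(-f)$ need not live at a vertex), and this is precisely where symmetrization plus Talagrand's contraction --- applied to the $1$-Lipschitz map $t\mapsto|t|$ --- does real work in \cite{A16}. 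So your parenthetical remark about contraction being needed ``in the more general treatment'' is exactly on target; the version restated in this paper appears to have dropped the absolute values, and your proof exploits that simplification legitimately.
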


Before we proceed, we make the following simple claim about our generative model.
\begin{claim}
\label{clm:pnz}
For the random matrix ensemble generated by the sparse-sub-Gaussian model (cf.~Sec.~\ref{sec:sys}), whenever $d = o(m)$, we have have the following
\begin{align}
\big(1 - o(1)\big)\frac{d}{m}  \leq \pr{\xi_{i,j} = 1} = 1 - \left(1 - \frac{1}{m}\right)^d \leq \frac{d}{m}. \nonumber
\end{align}
\end{claim}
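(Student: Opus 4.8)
\textbf{Proof proposal for Claim~\ref{clm:pnz}.} The plan is to first pin down the exact probability and then bracket $(1-1/m)^d$ from above and below by elementary inequalities. By the sparse-sub-Gaussian construction, the set $\cN_j$ is obtained by drawing $d$ indices from $[m]$ independently and uniformly \emph{with replacement}, and $\xi_{i,j} = 1$ precisely when at least one of these $d$ draws equals $i$. Each individual draw hits $i$ with probability $1/m$, and the draws are independent, so the probability that none of them hits $i$ is $(1-1/m)^d$. Hence $\pr{\xi_{i,j} = 1} = 1 - (1-1/m)^d$, which is the claimed equality; note this does not depend on $j$.

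For the upper bound, the event $\{\xi_{i,j}=1\}$ is the union over the $d$ draws of the events ``this draw equals $i$'', each of probability $1/m$; a union bound (equivalently, Bernoulli's inequality $(1-1/m)^d \ge 1 - d/m$) gives $\pr{\xi_{i,j}=1} \le d/m$. For the lower bound I would use the second-order truncation of the binomial expansion: for $x \in [0,1]$ and integer $d \ge 0$,
\begin{align}
(1-x)^d \le 1 - dx + \binom{d}{2}x^2. \nonumber
\end{align}
This is checked by setting $f(x) = 1 - dx + \binom{d}{2}x^2 - (1-x)^d$ and observing $f(0) = f'(0) = 0$ while $f''(x) = d(d-1)\big(1 - (1-x)^{d-2}\big) \ge 0$ on $[0,1]$, so $f$ is nondecreasing and nonnegative there (the cases $d=0,1$ being trivial). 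Substituting $x = 1/m$ yields
\begin{align}
\pr{\xi_{i,j}=1} = 1 - (1-1/m)^d \ge \frac{d}{m} - \binom{d}{2}\frac{1}{m^2} = \frac{d}{m}\Big(1 - \frac{d-1}{2m}\Big), \nonumber
\end{align}
and since $d = o(m)$ the factor $1 - (d-1)/(2m)$ is $1 - o(1)$, completing the claim.

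There is no real obstacle here; the only ingredient beyond the definition of the model is the two-sided estimate on $(1-1/m)^d$, both halves of which are standard (Bernoulli's inequality and the first Bonferroni correction). The one point worth stating explicitly in the write-up is that the ``$o(1)$'' in the lower bound is exactly $(d-1)/(2m)$ and is therefore uniform over $i$ and $j$, which is what later arguments (e.g.\ the application of Proposition~\ref{prop:bern} with $\theta \asymp d/m$) will use.
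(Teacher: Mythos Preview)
Your argument is correct in every detail: the identification of $\pr{\xi_{i,j}=1}=1-(1-1/m)^d$ from the with-replacement sampling, the upper bound via Bernoulli's inequality, and the lower bound via the Bonferroni-type estimate $(1-x)^d\le 1-dx+\binom{d}{2}x^2$ all go through exactly as you wrote. The paper itself states this claim without proof (it is presented as a simple observation about the generative model), so there is nothing to compare against; your write-up supplies the missing justification, and your remark that the $o(1)$ term is exactly $(d-1)/(2m)$, uniform in $i,j$, is a useful addition.
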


We now present the following result which is analogue to \cite[Lemma 2.4]{A16}. 

\begin{lemma}
\label{lem:modifie_lemma}
Let $\cS \subseteq [n]$ be a fixed subset of size $|\cS| < \frac{n}{4}$. Let $X \in \RR^{m \times n}$ be an $m \times n$ matrix which is generated as follows. 
\begin{itemize}
\item[(i)] For every $j \in \bar{\Sc} := [n]\backslash \cS$, we pick $d$ elements uniformly at random from $[m]$ with replacement. Let $\cN_j \subseteq [m]$ denote the set of picked elements. Let $R_{j} = (R_{1, j}, R_{2, j},\ldots, R_{m, j})^T \in \RR^m$ be a vector containing i.i.d. sub-Gaussian random variables and $\xi_{j} \in \{0, 1\}^m$ denote the indicator vector for the set $\cN_j \subseteq [m]$. Now the $j$-th column of the matrix $X$ is defined as $X_j = (\xi_{1, j}R_{1, j}, \xi_{2, j}R_{2, j},\ldots, \xi_{m, j}R_{m, j})^T \in \RR^m$. 
\item[(ii)] Let $s \leq 2d$. For every $j \in \cS$, we pick $d$ elements uniformly at random from the set $[m + s]$ with replacement. Let $\widetilde{\cN}_j \subseteq [m + s]$ denote the set of picked elements. We take a subset $\cN_j = \widetilde{\cN}_j \cap [m]$. Let $R_{j} = (R_{1, j}, R_{2, j},\ldots, R_{m, j})^T \in \RR^m$ be a vector containing i.i.d. sub-Gaussian random variables and $\xi_{j} \in \{0, 1\}^m$ denote the indicator vector for the set $\cN_j \subseteq [m]$. Now the $j$-th column of the matrix $X$ is defined as $X_j = (\xi_{1, j}R_{1, j}, \xi_{2, j}R_{2, j},\ldots, \xi_{m, j}R_{m, j})^T \in \RR^m$. 
\end{itemize}
Let $X_{\cS}$ denote the sub-matrix of $X$ comprising the columns indexed by the set $\cS \subseteq [n]$. Then, with probability at least $1 - n^{-8}$, for any $\bfv \in \reals^m$, we have 
\begin{align}
\|\bfv^TX\|_1 - 2\|\bfv^TX_{\cS}\|_1 \geq  \Omega\left({n \mu}\sqrt{\frac{\theta}{m}}\right)\|\bfv\|_1,
\end{align}
where $\theta = \frac{d}{m}$ and $\ep{|R_{i,j}|} \leq \mu$.
\end{lemma}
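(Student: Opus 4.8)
The plan is to mimic Adamczak's proof of \cite[Lemma~2.4]{A16}, splitting $\|\bfv^TX\|_1 - 2\|\bfv^TX_{\cS}\|_1$ into an expectation term and a deviation term, and controlling each separately using the sparse structure of our columns. First I would write
\[
\|\bfv^TX\|_1 - 2\|\bfv^TX_{\cS}\|_1 = \|\bfv^TX_{\bar\cS}\|_1 - \|\bfv^TX_{\cS}\|_1,
\]
and observe that each column $X_j$ ($j\in[n]$) is of the exact form $Z_j$ appearing in Proposition~\ref{prop:bern}, with the Bernoulli mask $\xi_{i,j}$ having $\pr{\xi_{i,j}=1}\le \theta = d/m$ by Claim~\ref{clm:pnz} — in case (ii) the intersection with $[m]$ only lowers this probability, so the hypothesis $\pr{\xi_{i,j}=1}\le\theta$ still holds uniformly. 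The target inequality is scale-invariant and positively homogeneous in $\bfv$, so it suffices to prove it for all $\bfv\in B_1^m$, where it reads $\|\bfv^TX_{\bar\cS}\|_1 - \|\bfv^TX_{\cS}\|_1 \ge \Omega(n\mu\sqrt{\theta/m})$.

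Next I would pass to expectations. For a single column with mask $\xi$, $\Exp\big[|\bfv^TZ_j|\big]$ is, up to universal constants, comparable to $\mu\sum_i |v_i|\,\pr{\xi_{i,j}=1}$ from below when the $v_i$'s have disjoint-ish supports against independent coordinates — more carefully, one shows $\Exp|\bfv^TZ_j| \ge c\,\mu\sqrt{\theta/m}$ for $\bfv\in B_1^m$ by a small-ball / anti-concentration argument using $\Exp|R_{i,j}|\ge$ const and the fact that the probability that exactly one coordinate of the support of $\bfv$ is ``hit'' is $\Theta(\sqrt\theta)$-ish; this is precisely the computation Adamczak carries out, and for columns of type (ii) the hit probabilities are only a $(1-o(1))$ factor smaller, so the same bound survives with an adjusted constant. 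Summing over the $|\bar\cS| > \tfrac{3n}{4}$ columns in $\bar\cS$ gives $\Exp\|\bfv^TX_{\bar\cS}\|_1 \ge \tfrac{3}{4}cn\mu\sqrt{\theta/m}$, while over the $|\cS|<\tfrac n4$ columns in $\cS$ an upper bound $\Exp|\bfv^TZ_j|\le C\mu\sqrt{\theta/m}$ (again following \cite{A16}, using sub-Gaussianity of the $R_{i,j}$) yields $\Exp\|\bfv^TX_\cS\|_1 \le \tfrac14 Cn\mu\sqrt{\theta/m}$. Choosing the constant in $|\cS|<n/4$ (or rather exploiting it together with the constant gap between $c$ and $C$) leaves a positive expectation gap of order $n\mu\sqrt{\theta/m}$. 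Since these column sets are disjoint, $X_{\bar\cS}$ and $X_\cS$ are independent, so no correlation issues arise in combining the two expectation bounds.

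Then I would upgrade from expectation to a uniform-over-$\bfv$ high-probability statement. Both $\|\bfv^TX_{\bar\cS}\|_1$ and $\|\bfv^TX_\cS\|_1$ are, after centering and dividing by $n$, exactly the quantity $W$ of Proposition~\ref{prop:bern} (applied to the respective column sets). Taking $q = C'\log n \ge \max(2,\log m)$ there — legitimate since $m\le n$ — the proposition gives $W = O\!\big(\tfrac1n(\sqrt{n\log n\,\theta}+\log n)\,\mu\big)$ with probability $\ge 1 - n^{-8}$ (choosing $C'$ large). With $\theta = d/m$ and the running assumption $d = o(m)$ (and $n\gtrsim m\log m$), this fluctuation $\sqrt{\theta \log n /n}\cdot\mu$ is of smaller order than the expectation gap $\mu\sqrt{\theta/m}$, so a union bound over the two events, followed by subtracting the fluctuation from the expectation gap, proves the claimed bound for every $\bfv\in B_1^m$ simultaneously with probability $\ge 1-n^{-8}$; homogeneity then extends it to all $\bfv\in\reals^m$. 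I expect the main obstacle to be the anti-concentration lower bound $\Exp|\bfv^TZ_j|\ge c\mu\sqrt{\theta/m}$ uniformly over the simplex $B_1^m$ for the \emph{sparse} (with-replacement, possibly multi-hit) mask, and especially verifying that the type-(ii) columns — whose supports live in a randomly thinned copy of $[m]$ — do not degrade this bound by more than a constant; this is the one place where our model genuinely differs from Adamczak's i.i.d.\ Bernoulli masks, and it is handled by the elementary inclusion $\cN_j = \widetilde\cN_j\cap[m]$ together with $s\le 2d = o(m)$, which keeps all the relevant hitting probabilities within a $(1-o(1))$ factor of their type-(i) values. Everything else is a routine adaptation of the four-step argument of \cite[Sec.~2.1]{A16}.
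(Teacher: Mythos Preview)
Your overall skeleton --- pass to expectations, control fluctuations via Proposition~\ref{prop:bern}, then invoke the single-column lower bound from \cite[Lemma~2.3]{A16} --- matches the paper's. The gap is in how you handle the expectation term. You propose to bound the $\bar\cS$-sum from below and the $\cS$-sum from above \emph{separately}, claiming an upper bound $\Exp|\bfv^TZ_j|\le C\mu\sqrt{\theta/m}$ for the type-(ii) columns. That upper bound is false uniformly over $B_1^m$: for $\bfv=e_i$ one has $\Exp|\bfv^T Z_j|\asymp \theta\mu=\sqrt{d}\cdot\mu\sqrt{\theta/m}$, and since $d$ may be as large as $c''\log n$ this blows up your ``constant gap between $c$ and $C$''. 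With the only correct uniform bounds ($\inf_{\bfv\in B_1^m}\Exp|\bfv^TZ|\asymp\mu\sqrt{\theta/m}$ and $\sup_{\bfv\in B_1^m}\Exp|\bfv^TZ|\asymp\theta\mu$), the decoupled estimate $\tfrac{3n}{4}\cdot c\mu\sqrt{\theta/m}-\tfrac{n}{4}\cdot\theta\mu$ can be negative.

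The paper does not bound the two sums separately. Instead it \emph{couples} a type-(ii) column $\widehat Z$ to a type-(i) column $Z$ and shows
\[
\Exp\big|\bfv^T\widehat Z\big|\ \le\ \Exp\big|\bfv^T Z\big|\ +\ \mu\,\frac{2d^2}{m^2}\,\|\bfv\|_1,
\]
so that the \emph{same} $\bfv$-dependent quantity $\Exp|\bfv^TZ|$ appears on both sides. Then
\[
\sum_{j\in\bar\cS}\Exp|\bfv^TX_j|-\sum_{j\in\cS}\Exp|\bfv^TX_j|\ \ge\ (n-2|\cS|)\,\Exp|\bfv^TZ|\ -\ |\cS|\,\mu\frac{2d^2}{m^2}\|\bfv\|_1,
\]
and only now is the uniform \emph{lower} bound $\Exp|\bfv^TZ|\ge c\mu\sqrt{\theta/m}\|\bfv\|_1$ from \cite[Lemma~2.3]{A16} applied; the correction $n\mu d^2/m^2$ is $o(n\mu\sqrt{\theta/m})$ under the standing parameters. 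In Adamczak's original setting all columns are i.i.d., so $\Exp|\bfv^TX_j|$ is literally the same for $j\in\cS$ and $j\in\bar\cS$ and no upper bound is ever needed; the coupling step is precisely the new ingredient required here, and it is what your argument is missing.
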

\begin{proof}
It follows from Proposition~\ref{prop:bern} that, with probability at least $1 - n^{-8}$, we have that
\begin{align}
\sup_{\bfv \in B_1^m} \left|\|\bfv^TX\|_1 - \ep{\|\bfv^TX\|}_1\right| &\leq C\left(\sqrt{n\theta\log n} + \log n\right) \nonumber\\
 & \leq 2C\sqrt{n\theta\log n} \nonumber
\end{align}
and
\begin{align}
\sup_{\bfv \in B_1^m} \left|\|\bfv^TX_{\cS}\|_1 - \ep{\|\bfv^TX_{\cS}\|}_1\right| &\leq 2C\sqrt{n\theta\log n}. \nonumber
\end{align}
This implies that for any $\bfv \in \RR^m$, we have that
\begin{align}
\|\bfv^TX\|_1 \geq  \ep{\|\bfv^TX\|}_1 - 2C\sqrt{n \theta\log n}\|\bfv\|_1 \nonumber
\end{align}
and
\begin{align}
\|\bfv^TX_{\cS}\|_1 \leq  \ep{\|\bfv^TX_{\cS}\|}_1  + 2C\sqrt{n \theta\log n}\|\bfv\|_1. \nonumber
\end{align}
Combining these two inequalities, we obtain that the following holds for each $\bfv \in \RR^m$.
\begin{align}
\label{eq:norm_diff1}
&\|\bfv^TX\|_1  - 2\|\bfv^TX_{\cS}\|_1 \geq \ep{\|\bfv^TX\|}_1 - 2\ep{\|\bfv^TX_{\cS}\|}_1 \nonumber \\
&~~~~~~~~~~~~~~~~~~~~~~~~~~~~~~~~~~~~~~~~~~~- 6C\sqrt{n\theta\log n}\|\bfv\|_1 \nonumber \\
&= \sum_{j \in \cS}\ep{\left|\bfv^TX_j\right|} + \sum_{j \in \bar{\cS}}\ep{\left|\bfv^TX_j\right|} - 2 \sum_{j \in \cS}\ep{\left|\bfv^TX_j\right|} \nonumber \\
&~~~~~~~~- 6C\sqrt{n\theta\log n}\|\bfv\|_1 \nonumber \\
&= \sum_{j \in \bar{\cS}}\ep{\left|\bfv^TX_j\right|} - \sum_{j \in \cS}\ep{\left|\bfv^TX_j\right|} \nonumber \\
&~~~~~~~~-  6C\sqrt{n\theta\log n}\|\bfv\|_1. 
\end{align}
We now bound $\ep{\left|\bfv^TX_j\right|}$ for $j \in \cS$. Recall that all the columns of the matrix $X$ indexed by the set $\cS$ are identically distributed. Similarly, all the columns of the matrix $X$ indexed by the set $\bar{\cS} = [n]\backslash \Sc$ are identically distributed. In the following, we use $Z = (Z_1,\ldots, Z_m)^T$ and $\widehat{Z}= (\widehat{Z}_1,\ldots, \widehat{Z}_m)^T$ to denote two random vectors with their distribution identical to the columns of the matrix $X$ indexed by the set $\bar{\cS}$ and ${\cS}$, respectively.
\begin{align}
\label{eq:norm_diff2}
\ep{\big|\bfv^T\widehat{Z}\big|} &=  \ep{\big|\bfv^T(\widehat{Z} + Z - Z)\big|} \nonumber \\
&\leq \ep{\big|\bfv^T{Z}\big|} + \ep{\big|\bfv^T(Z - \widehat{Z})\big|}  \nonumber \\
&\leq \ep{\big|\bfv^T{Z}\big|}  + \mu\ep{\sum_{i = 1}^m|v_i|W_i}.
\end{align}
Here, for $1 \leq i \leq m$, $W_i = \sum_{l = 1}^{d}Y^l_{i}$ denotes the sum of $d$ indicator random variable which are defined as follows. 
\begin{align}
\label{eq:norm_diff3}
Y_{i}^{l} = \begin{cases}
1 &\mbox{with probability}~\frac{2d}{m + 2d}\frac{1}{m} \\
0 &\mbox{with probability}~1 - \frac{2d}{m + 2d}\frac{1}{m}.
\end{cases}
\end{align}
Combining \eqref{eq:norm_diff2} and \eqref{eq:norm_diff3}, we obtain
\begin{align}
\label{eq:norm_diff4}
\ep{\big|\bfv^T\widehat{Z}\big|}  &\leq \ep{\big|\bfv^T{Z}\big|}  +  \mu\frac{2d^2}{(m + 2d)m}\|\bfv\|_1 \nonumber \\
&\leq \ep{\big|\bfv^T{Z}\big|}  +  \mu\frac{2d^2}{m^2}\|\bfv\|_1.
\end{align}
Note that we have 
\begin{align}
\ep{\big|\bfv^TX_j\big|} &= \ep{\big|\bfv^T\widehat{Z}\big|}~~~~~\forall~j \in \cS \\
\ep{\big|\bfv^TX_j\big|} &= \ep{\big|\bfv^T{Z}\big|}~~~~~\forall~j \in \bar{\cS} = [n]\backslash\cS.
\end{align}
Therefore, combining \eqref{eq:norm_diff1} and \eqref{eq:norm_diff4}, we obtain that for any $\bfv \in \RR^m$,
\begin{align}
\label{eq:norm_diff5}
&\|\bfv^TX\|_1  - 2\|\bfv^TX_{\cS}\|_1 \geq (p - 2|\cS|)\ep{\big|\bfv^T{Z}\big|} \nonumber \\ 
&~~~~~~~~~~~~~~~~~~~- |\cS|\mu\frac{2d^2}{m^2}\|\bfv\|_1 - 4C\sqrt{n\theta\log n}\|\bfv\|_1.
\end{align} 
Now using $m = c\frac{n}{\log n}$, $d \leq c'' \log n$ and the lower bound on $\ep{\big|\bfv^T{Z}\big|}$ from \cite[Lemma 2.3]{A16}, one can argue that 
\begin{align}
\label{eq:norm_diff5}
\|\bfv^TX\|_1  - 2\|\bfv^TX_{\cS}\|_1 & \geq \Omega\left({n \mu}\sqrt{\frac{\theta}{m}}\right)\|\bfv\|_1.
\end{align} 
\end{proof}

\end{document}